\DeclareMathOperator*{\argmin}{arg\,min}
\newtheorem{theorem}{Theorem}[section]
\newtheorem{corollary}{Corollary}[theorem]
\newtheorem{lemma}[theorem]{Lemma}
\newtheorem{definition}{Definition}
\newtheorem{example}{Example}
\journal{Pattern Recognition}
\begin{document}

\begin{frontmatter}

%% Title, authors and addresses

%% use the tnoteref command within \title for footnotes;
%% use the tnotetext command for theassociated footnote;
%% use the fnref command within \author or \address for footnotes;
%% use the fntext command for theassociated footnote;
%% use the corref command within \author for corresponding author footnotes;
%% use the cortext command for theassociated footnote;
%% use the ead command for the email address,
%% and the form \ead[url] for the home page:
%% \title{Title\tnoteref{label1}}
%% \tnotetext[label1]{}
%% \author{Name\corref{cor1}\fnref{label2}}
%% \ead{email address}
%% \ead[url]{home page}
%% \fntext[label2]{}
%% \cortext[cor1]{}
%% \address{Address\fnref{label3}}
%% \fntext[label3]{}

\title{{Mixability of Integral Losses:\\a Key to} Efficient Online Aggregation\\of Functional and Probabilistic Forecasts}

%% use optional labels to link authors explicitly to addresses:
%% \author[label1,label2]{}
%% \address[label1]{}
%% \address[label2]{}

\author[sk]{Alexander Korotin}
\author[iitp,sk]{Vladimir V'yugin}
\author[sk]{Evgeny Burnaev}

\address[sk]{Skolkovo Institute of Science and Technology, Moscow, Russia}
\address[iitp]{Institute for Information Transmission Problems, Moscow, Russia}

\begin{abstract}
In this paper we extend the setting of the online prediction with expert advice to function-valued forecasts. At each step of the online game several experts predict a function, and the learner has to efficiently aggregate these functional forecasts into a single forecast. We adapt basic mixable (and exponentially concave) loss functions to compare functional predictions and prove that these adaptations are also mixable (exp-concave). We call this phenomenon {mixability (exp-concavity) of integral loss functions}. As an application of our main result, we prove that various loss functions used for probabilistic forecasting are mixable (exp-concave). The considered losses include Sliced Continuous Ranked Probability Score, Energy-Based Distance, Optimal Transport Costs \& Sliced Wasserstein-2 distance, Beta-2 \& Kullback-Leibler divergences, Characteristic function and Maximum Mean Discrepancies.

\end{abstract}

%%Graphical abstract
% \begin{graphicalabstract}
% %\includegraphics{grabs}
% \end{graphicalabstract}

%Research highlights
% \begin{highlights}
% \item Integral extensions of mixable loss functions are also mixable;
% \item Most of losses on probability distributions are integral losses and are mixable;
% \item Efficient online learning is possible with probabilistic forecasts;
% \end{highlights}

\begin{keyword}
Integral Loss Functions\sep Mixability\sep Exponential Concavity\sep  Prediction with Expert Advice\sep Functional Forecasting\sep Probabilistic Forecasting
\end{keyword}

\end{frontmatter}

% \section*{Research Highlights}
% \begin{enumerate}
% \item Integral extensions of mixable loss functions are also mixable;
% \item Most of losses on probability distributions are mixable integral losses;
% \item Constant-regret online learning is possible with probabilistic forecasts;
% \end{enumerate}

% \linenumbers
\section{Introduction}

Classic online prediction with expert advice \cite{vovk1998game}
%,vovk1999derandomizing,vovk2001competitive}
is a competition between the learner and the adversary which consists of $T$ sequential steps. At each step of the game the learner has to efficiently aggregate the predictions of a given (fixed) pool of experts (algorithms, machine learning models, human experts). The described scenario is formalized in Protocol \ref{protocol}.

At the beginning of each step $t=1,2,\dots,T$ experts $1,2,3,\dots,N$ from a given (fixed) pool of experts $\mathcal{N}=\{1,2,\dots,N\}$ output their predictions $\gamma_{t}^{n}\in\Gamma$ of yet unknown value $\omega_{t}\in\Omega$. Next, the learner has to combine (merge, mix, average) these forecasts into a single forecast $\overline{\gamma}_{t}\in\Gamma$.

At the end of the step the true outcome is revealed, and both the learner and the experts suffer their losses by using a loss function $\lambda:\Gamma\times \Omega\rightarrow \mathbb{R}_{+}$. The loss of expert $n\in\mathcal{N}$ is denoted by $l_{t}^{n}=\lambda(\gamma_{t}^{n},\omega_{t})$; the loss of the learner is denoted by $h_{t}=\lambda(\overline{\gamma_{t}},\omega_{t})$. We use $H_{t}=\sum_{\tau=1}^{t}h_{\tau}$ and $L_{t}^{N}=\sum_{\tau=1}^{t}l_{\tau}^{n}$ to denote 
the cumulative loss of the learner and the expert $n\in\mathcal{N}$ respectively.
\begin{algorithm}
\SetAlgorithmName{Protocol}{empty}{Empty}
\SetKwInOut{Parameters}{Parameters}
\Parameters{Pool of experts $\mathcal{N}=\{1,2,3\dots,N\}$; Game length $T$; Loss function $\lambda:\Gamma\times \Omega\rightarrow \mathbb{R}_{+}$}
\For{$t=1,2,\dots,T$}{
1. Experts $n\in\mathcal{N}$ provide forecasts $\gamma_{t}^{n}\in \Gamma$\;
2. Learner combines forecasts $\gamma_{t}^{n}$ into forecast $\overline{\gamma_{t}}\in\Gamma$\;
  3. Nature reveals true outcome $\omega_{t}\in\Omega$\;
  4. Experts $n\in\mathcal{N}$ suffer losses $l_{t}^{n}=\lambda(\gamma_{t}^{n},\omega_{t})$\;
  5. Learner suffers loss $h_{t}=\lambda(\overline{\gamma_{t}},\omega_{t})$\;
 }
\caption{Online Prediction with Expert Advice}
\label{protocol}
\end{algorithm}
The goal of the learner is to perform as efficiently as possible w.r.t the best expert\footnote{In the online scenario, the best expert is unknown beforehand.} in the pool, i.e. to minimize the regret

$$R_{T}=\sum_{t=1}^{T}h_{t}-\min_{n\in\mathcal{N}} \sum_{t=1}^{T}l_{t}^{n}=H_{T}-\min_{n\in\mathcal{N}} L_{T}^{n}.$$

Among lots of existing learner's strategies for combining experts predictions \cite{cesa-bianchi,hazan2016introduction},
%kivinen1999averaging,kalai2005efficient,freund1997decision,
the aggregating algorithm (AA) by \cite{vovk1998game}
%vovk1999derandomizing,vovk2001competitive}
is typically considered to be the best. For a wide class of $\eta$-mixable ($\eta$-exponentially concave) loss functions $\lambda$ it provides the way to efficiently combine experts' predictions $\{\gamma_{t}^{n}\}_{n=1}^{N}$ to a single prediction $\overline{\gamma_{t}}$ so that the learner achieves a small regret bound. More precisely, if the learner follows the AA strategy, the regret w.r.t. the best expert will not exceed $\frac{\ln N}{\eta}$, i.e. $R_{T}\leq \frac{\ln N}{\eta}$. Here $\eta$ is the maximal constant for which the loss function $\lambda$ is $\eta$-mixable ($\eta$-exp-concave). The bound does not depend on the game length $T$. Besides, the knowledge of $T$ is not required before the game. Many existing loss functions $\lambda:\Gamma\times \Omega \rightarrow \mathbb{R}_{+}$ are mixable (exp-concave). Thus, AA can be efficiently applied to combine experts' predictions. Below we briefly review two most common practical online learning problems and two corresponding (mixable) loss functions which are typically used.

The most straightforward practical application of online prediction with expert advice is online \textbf{time-series forecasting} or online \textbf{regression}. Typical examples include electricity price and load forecasting \cite{gaillard2015forecasting},
%gaillard2011further,,csaji2014adaptive},
thermal load forecasting \cite{geysen2018operational}, etc. Predictions $\gamma^{n}\in\Gamma$ and the outcome $\omega\in\Omega$ are usually assumed to be real-valued, i.e. $\Omega, \Gamma\subset \mathbb{R}$. Naturally, the \textbf{squared loss function} $\lambda(\gamma,\omega)=(\gamma-\omega)^{2}$ is used to compare the prediction with the true output. It is known to be mixable \cite{vovk1998game} and exponentially concave \cite{hazan2016introduction} under certain boundness conditions.

Another practical scenario is online \textbf{classification with expert advice}. In this case the goal of the learner is to predict probabilities $\gamma\in\Gamma=\Delta_{K}=\{(\|\gamma\|_{1}=1)\wedge(\gamma>0)\}$ for a given list of events $\Omega=\{1,2,\dots,K\}$ based on experts' predictions. After the true outcome $\omega$ is revealed, the forecast is typically assessed by using the \textbf{logarithmic loss}, i.e. $\lambda(\gamma,\omega)=-\log \gamma(\omega)$. It is mixable and exp-concave, see \cite{vovk1998competitive,adamskiy2012putting}. Such a multi-class classification scenario may, for example, refer to prediction of sports results or the forecasting of precipitation occurrence.
% \cite{kumar1999operational}.  \cite{boulier2003predicting}
Other classification losses such as \textbf{Brier score} $\lambda(\gamma,\omega)=\sum_{k=1}^{K}(\gamma_{k}-\mathbb{I}[\omega=k])^{2}$ are also known to be mixable, see \cite{vovk2009prediction}. Mixability holds true even if the output is itself a distribution $\omega\in \Omega=\Delta_{K}$ and the score is given by $\lambda(\gamma,\omega)=\|\omega-\gamma\|^{2}$, see \cite{zhdanov2012universal}.

A more advanced problem is the prediction of \textbf{vector-valued outcomes}. In this case both the experts and the learner output a finite-dimensional vector, e.g. a weekly weather forecast.
% , stock prices evolution for a short period ahead \cite{yamin2004adaptive}, etc.
For example, for $D$-dimensional outputs in regression, it is natural to consider $\Omega:=\Omega^{D}$ and $\Gamma:=\Gamma^{D}$. Naturally, the vector-input square loss
\begin{equation}\lambda_{D}(\gamma,\omega)=\frac{1}{D}\sum_{d=1}^{D}\lambda(\gamma_{d},\omega_{d})=\frac{1}{D}\sum_{d=1}^{D}(\omega_{d}-\gamma_{d})^{2}=\frac{1}{D}\|\omega-\gamma\|^{2}\label{vector-square-loss}\end{equation}
can be used to assess the quality of the forecasts.

Until recently, it was unknown whether \textbf{vectorized loss functions} such as \eqref{vector-square-loss} are mixable and AA can be efficiently applied. {The mixability of vectorized loss functions was studied} by \cite{Kaln2017}. They proved that every vector loss function of the form $\lambda(\omega,\gamma)=\frac{1}{D}\sum_{d=1}^{D}\lambda(\omega_{d},\gamma_{d})$ is $\eta$-mixable (exp-concave) if the corresponding $1$-dimensional-input $\lambda$ is $\eta$-mixable (exp-concave). Meanwhile, the aggregated forecast is built by coordinate-wise aggregation of experts' forecasts.

In this paper, we introduce the general notion of {\textbf{integral loss functions} and prove their \textbf{mixability} (\textbf{exp-concavity})}. We consider the online scenario to predict the function: at each step $t$ the experts output functions $\gamma_{t}^{n}:\mathcal{X}\rightarrow \Gamma$ (i.e. $\gamma_{t}^{n}\in\Gamma^{\mathcal{X}}$) and the learner has to combine these functions into a single function $\overline{\gamma}_{t}:\mathcal{X}\rightarrow \Gamma$. The true output is a function $\omega_{t}:\mathcal{X}\rightarrow \Omega$.
{For example, in the task of probabilistic forecasting of a scalar value, each experts' prediction may be provided as the cumulative distribution function $\mathbb{R}^{1}\rightarrow [0,1]$ (CDF) of predictive distribution. In turn, the true output is the CDF of the empirical observed outcome. We provide other examples related to probabilistic forecasting in Table 1 of Section \ref{sec-table}.
}

For the function-valued forecasting it is reasonable to measure loss via \textbf{integral loss functions} which naturally arise from loss functions used for comparing one-dimensional outcomes.

\begin{definition}[Integral loss function]Let $(\Gamma,\sigma_{\Gamma}),(\Omega,\sigma_{\Omega}),(\mathcal{X},\sigma_{\mathcal{X}})$ be measurable spaces.
Assume that ${\lambda:\Gamma\times\Omega\rightarrow\mathbb{R}_{+}}$ is a loss function measurable w.r.t. $\sigma_{\Gamma}\times\sigma_{\Omega}$. Let $\mu_{\omega}$ be $\omega$-dependent measure on $(\mathcal{X},\sigma_{\mathcal{X}})$ and $u_{\omega}$ be some $\omega$-dependent $\sigma$-finite non-negative measurable function satisfying
$$\int_{\mathcal{X}}u_{\omega}(x)d\mu_{\omega}(x)=1$$
for all $\omega$. Then the function $\lambda_{u,\mu}:\mathcal{M}(\Gamma^{\mathcal{X}})\times\mathcal{M}(\Omega^{\mathcal{X}})\rightarrow\mathbb{R}_{+}$ defined by
\begin{equation}\lambda_{u,\mu}(\gamma,\omega)=\int_{\mathcal{X}}\lambda\big(\gamma(x),\omega(x)\big) u_{\omega}(x)d\mu_{\omega}(x)\label{integral-loss}\end{equation}
is called an $\mathcal{X}$-integral $\lambda$-loss function.\footnote{The usage of weight function $u_{\omega}$  together with measure $\mu_{\omega}$ is redundant. One may naturally eliminate it by changing variables: $(u_{\omega},\mu_{\omega})\mapsto (u_{\omega}',\mu_{\omega}')=(1,\mu_{\omega}')$, with $d\mu_{\omega}':=u_{\omega}d\mu_{\omega}$. However, we keep the notation over-parametrized to be compatible with all the losses discussed in Section \ref{sec-forecasting-probs}.} Here we use $\mathcal{M}(\Gamma^{\mathcal{X}}),\mathcal{M}(\Omega^{\mathcal{X}})$ to denote the sets of all measurable functions $\mathcal{X}\rightarrow\Gamma$ and $\mathcal{X}\rightarrow\Omega$ respectively.
\end{definition}

Clearly, such a general scenario of forecasting under integral loss functions extends vector-valued forecasting scenario of \cite{Kaln2017}. Indeed, if ${|\mathcal{X}|=D}$, one may use $u_{\omega}(x)\equiv 1$ and $\mu_{\omega}(x)\equiv \frac{1}{|\mathcal{X}|}=\frac{1}{D}$ and obtain a vectorized loss.

In real life, function-valued predictions can be used for forecasting physical processes for a period ahead, e.g. temperature distribution \cite{chen2012high}, ocean wave prediction \cite{rusu2013evaluation}, etc. Besides, every \textbf{probabilistic forecast} is actually a function, e.g. density or cumulative distribution function, and classical function-based losses can be used to assess the quality.

\vspace{2mm}\noindent\textbf{The main contributions of the paper are:}
\begin{enumerate}
    \item We introduce the concept of function-valued forecasting and related concept of {mixable (exponentially concave) integral loss functions}.
    \item We prove that for every $\eta$-mixable (exp-concave) measurable loss function ${\lambda:\Gamma\times\Omega\rightarrow \mathbb{R}_{+}}$ its corresponding $\mathcal{X}$-integral $\lambda$-loss function $\lambda_{u,\mu}:\mathcal{M}(\Gamma^{\mathcal{X}})\times \mathcal{M}(\Omega^{\mathcal{X}})\rightarrow \mathbb{R}_{+}$ is $\eta$-mixable (exp-concave) for every admissible  $u,\mu$. The aggregated forecast is built point-wise according to the aggregating rule for $\lambda$.
    \item We demonstrate applications of our results to probabilistic forecasting. We derive mixability (exp-concavity) for Sliced Continuous Ranked Probability Score, Energy-Based Distance, Beta-2 and Kullback-Leibler Divergences, Optimal transport costs \& Sliced Wasserstein-2 Distance, Characteristic Function and Maximum Mean Discrepancies. The results are summarised in Table 1 of Subsection  \ref{sec-table}.
\end{enumerate}

Although our paper is mainly built around online learning framework of prediction with expert advice, we emphasize that the properties of mixability and exponential concavity that we study are extremely useful in other areas of machine learning. In \textbf{statistical machine learning}, mixability typically guarantees faster convergence, see \cite{erven2012mixability}.
%mehta2014stochastic,van2015fast,
In \textbf{online convex optimization} exponential concavity usually leads to better regret bounds, see \cite{hazan2016introduction}. 
%mahdavi2015lower,

\vspace{2mm}\noindent\textbf{The article is structured as follows.} In Section \ref{sec-preliminaries}, we recall the definitions of mixability and exponential concavity of loss functions. In Section \ref{sec-int-mix}, we state the theorem on mixability (exp-concavity) of integral loss functions and prove it. In Section \ref{sec-forecasting-probs}, we apply our result to prove mixability of different loss functions used for comparing probability distributions. The results are summarised in Table 1 of Subsection \ref{sec-table}. In \ref{sec-complex-mix}, we give minor technical details. In \ref{sec-aa}, we review the strategy of AA and recall derivation of algorithm's constant regret bound.

\section{Preliminaries}
\label{sec-preliminaries}

In this section, we recall the definition of mixability and exponential concavity of loss functions.

\begin{definition}[Mixable loss function]A function $\lambda:\Gamma\times\Omega\rightarrow \mathbb{R}$ is called $\eta$-mixable if for all $N=1,2,\dots$, probability vectors $({\alpha}^{1},\dots,{\alpha}^{N})$ and vectors of forecasts $(\gamma^{1},\dots,\gamma^{N})\in \Gamma^{N}$ there exists an aggregated forecast $\overline{\gamma}\in\Gamma$ such that for all $\omega\in\Omega$ the following holds true:

\begin{equation}\exp\big[-\eta\lambda(\overline{\gamma}, \omega)\big]\geq \sum_{n=1}^{N}{\alpha}^{n}\exp\big[-\eta \lambda(\gamma^{n},\omega)\big].
\label{def-formula-mixability}
\end{equation}
\end{definition}
If function $\lambda$ is $\eta$-mixable, then it is also $\eta'$-mixable for all $0<\eta'\leq \eta$. The \textbf{maximal} $\eta$ (for which $\lambda$ is mixable) is always used in order to obtain lower regret bound for AA.

% One may prove by induction that if the mixability condition holds for $N=2$, then it holds for arbitrary pools of experts, even infinite (e.g. countable or continuous). Yet we use finite $N$ in order to keep the derivations of the paper derivations simple.

For $\eta$-mixable function $\lambda$ there exists a \textbf{substitution function}
$${\Sigma:\Gamma^{N}\times\Delta_{N}\rightarrow\Gamma}$$
which performs aggregation \eqref{def-formula-mixability} of forecasts $\gamma^{n}$ w.r.t. weights ${\alpha}^{n}$, and outputs aggregated forecast $\overline{\gamma}$. Such a function may be non-unique. For common loss functions, there are usually specific substitution functions  (given by exact formulas) under consideration.

\begin{example}[Square loss]
The function $\lambda(\gamma, \omega)=(\gamma-\omega)^{2}$ with $\Omega=\Gamma=[l,r]$ is $\frac{2}{(r-l)^{2}}$-mixable, see \cite{vovk1998game} or \cite[Section 3.6]{cesa-bianchi}. Its substitution $\Sigma_{L^{2}}^{[l,r]}$ is defined by
\begin{equation}\Sigma_{L^{2}}^{[l,r]}\big(\{\gamma^{n},{\alpha}^{n}\}_{n=1}^{N}\big)=\frac{r+l}{2}+\frac{(r-l)}{4}\log \frac{\sum_{n=1}^{N}{\alpha}^{n}\exp[-2\big(\frac{r-\gamma^{n}}{r-l}\big)^{2}]}{\sum_{n=1}^{N}{\alpha}^{n}\exp[-2\big(\frac{\gamma^{n}-l}{r-l}\big)^{2}]}.
\label{squared-loss-substitution}
\end{equation}
\label{square-loss-example}
\end{example}

\begin{example}[Logarithmic loss] The function $\lambda(\gamma,\omega)=-\log\gamma(\omega)$ with $\Omega=\{1,\dots,K\}$ and $\Gamma=\Delta_{K}$ is $1$-mixable, see \cite{adamskiy2012putting}. The substitution function $\Sigma_{log}$ is defined by
$$\big[\Sigma_{log}\big(\{\gamma^{n},{\alpha}^{n}\}_{n=1}^{N}\big)\big](k)=\sum_{n=1}^{N}{\alpha}^{n}\gamma^{n}(k).$$
\end{example}

\begin{definition}[Exponentially concave loss function] Let $\Gamma$ be a \textbf{convex subset of a linear space} over $\mathbb{R}$. A function $\lambda:\Gamma\times\Omega\rightarrow \mathbb{R}$ is called $\eta$-exponentially concave if for all $N=1,2,\dots$, probability vector $({\alpha}^{1},\dots,{\alpha}^{N})$ and vectors of forecast $(\gamma^{1},\dots,\gamma^{N})\in \Gamma^{N}$ the following holds true for all $\omega\in\Omega$:

$$\exp\big[-\eta\lambda(\overline{\gamma}, \omega)\big]\geq \sum_{n=1}^{N}{\alpha}^{n}\exp\big[-\eta \lambda(\gamma^{n},\omega)\big],$$
where $\overline{\gamma}=\sum_{n=1}^{N}{\alpha}^{n}\gamma^{n}$. Note that $\overline{\gamma}\in\Gamma$ due to the convexity of $\Gamma$.
\end{definition}
Clearly, \textbf{exponential concavity leads to mixability}.  {Indeed, one may naturally put
$\Sigma\big(\{\gamma^{n},{\alpha}^{n}\}_{n=1}^{N}\big)=\sum_{n=1}^{N}{\alpha}^{n}\gamma^{n}$ as the substitution function.} However, the inverse is not always true, see discussion in \cite{cesa-bianchi}.\footnote{In specific cases one may apply \textbf{exp-concavifying} transform to reparametrize the loss to make it exponentially concave, see \cite{kamalaruban2015exp}.}
%williamson2014geometry
Also exp-concavity is naturally defined only for convex subsets of linear spaces, while mixability can be defined on arbitrary sets.

The square loss function $\lambda(\gamma, \omega)=(\gamma-\omega)^{2}$ with ${\Gamma=\Omega=[l,r]}$ is $\frac{1}{2(r-l)^{2}}$-exponentially concave, see \cite{kivinen1999averaging}.\footnote{The maximal exponential concavity rate $\eta$ of squared loss is $4$ times lower than the corresponding mixability rate.} Also, from the previous subsection we see that the logarithmic loss is $1$-exponentially concave.

\section{Mixability (Exp-Concavity) of Integral Loss Functions}
\label{sec-int-mix}

In the framework of Protocol \ref{protocol} we consider function-valued forecasting. We prove that $\eta$-mixable (exp-concave) loss $\lambda$ for comparing single-value outcomes admits an $\eta$-mixable (exp-concave) integral extension for comparing function-valued outcomes.

% At each step of the game $t$ instead of predicting a single output $\gamma_{t}^{n}\in \Gamma$ and measuring the loss by using $\eta$-mixable $\lambda:\Gamma\times\Omega\rightarrow \mathbb{R}$, experts predict a function $\gamma_{t}^{n}\in \Gamma^{\mathcal{X}}$ on a given set $\mathcal{X}$. The learner aggregates these functions into $\overline{\gamma}_{t}$. The loss of prediction of function $\omega_{t}$ is measured by integral loss $\lambda_{u,\mu}(\gamma,\omega)$, recall equation \eqref{integral-loss}. Now we state and prove the main result on the mixability (exp-concavity) of integral losses.

\begin{theorem}[Mixability \& Exp-concavity of Integral Losses] Let $(\Gamma,\sigma_{\Gamma})$, $(\Omega,\sigma_{\Omega})$, $(\mathcal{X},\sigma_{\mathcal{X}})$ be measurable spaces.
Assume that ${\lambda:\Gamma\times\Omega\rightarrow\mathbb{R}_{+}}$ is a loss function measurable w.r.t. product $\sigma_{\Gamma}\times\sigma_{\Omega}$. Let $\lambda_{u,\mu}$ be $\mathcal{X}$-integral $\lambda$-loss function. Assume that the substitution function $\Sigma_{\lambda}$ is measurable. Then function $\lambda_{u,\mu}$ is $\eta$-mixable, and as a substitution function (for $N$ experts) we can use
$$\Sigma_{\lambda_{u,\mu}}:\big(\mathcal{M}(\Gamma^{\mathcal{X}})\big)^{N}\times \Delta_{N}\rightarrow \Gamma^{\mathcal{X}}$$
defined by point-wise ($x\in \mathcal{X}$) application of substitution function $\Sigma_{\lambda}$ for $\lambda$:
$$\Sigma_{\lambda_{u,\mu}}\big[\{\gamma^{n}, {\alpha}^{n}\}_{n=1}^{N}\big](x):=\Sigma_{\lambda}\big(\{\gamma^{n}(x), {\alpha}^{n}\}_{n=1}^{N}\big).$$
\label{theorem-main}
\end{theorem}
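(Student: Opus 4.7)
The plan is to reduce the statement to two ingredients: pointwise $\eta$-mixability of $\lambda$, which gives an inequality at each $x\in\mathcal{X}$, and Jensen's inequality applied to the convex log-sum-exp function, which lets us move the integral inside the logarithm in the right direction.

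First I would fix $N$, weights $(w^{1},\dots,w^{N})\in\Delta_{N}$, expert forecasts $\gamma^{1},\dots,\gamma^{N}\in\mathcal{M}(\Gamma^{\mathcal{X}})$, and a true outcome $\omega\in\mathcal{M}(\Omega^{\mathcal{X}})$, and define the candidate aggregate $\overline{\gamma}(x):=\Sigma_{\lambda}(\{\gamma^{n}(x),w^{n}\}_{n=1}^{N})$. A short preliminary remark verifies that $\overline{\gamma}\in\mathcal{M}(\Gamma^{\mathcal{X}})$: each map $x\mapsto(\gamma^{1}(x),\dots,\gamma^{N}(x))$ is measurable, and composing it with the measurable substitution $\Sigma_{\lambda}$ gives a measurable function. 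By the definition of $\eta$-mixability of $\lambda$, pointwise for every $x\in\mathcal{X}$,
\begin{equation*}
\exp\bigl[-\eta\,\lambda(\overline{\gamma}(x),\omega(x))\bigr]\;\geq\;\sum_{n=1}^{N}w^{n}\exp\bigl[-\eta\,\lambda(\gamma^{n}(x),\omega(x))\bigr].
\end{equation*}
Taking $-\frac{1}{\eta}\log$ of both sides and integrating against the probability measure $d\nu_{\omega}:=u_{\omega}\,d\mu_{\omega}$ on $\mathcal{X}$ yields
\begin{equation*}
\lambda_{u,\mu}(\overline{\gamma},\omega)\;\leq\;-\frac{1}{\eta}\int_{\mathcal{X}}\log\!\Bigl(\sum_{n=1}^{N}w^{n}e^{-\eta\lambda(\gamma^{n}(x),\omega(x))}\Bigr)\,d\nu_{\omega}(x).
\end{equation*}

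The key step is to bound the right-hand side from above by $-\frac{1}{\eta}\log\sum_{n}w^{n}\exp[-\eta\lambda_{u,\mu}(\gamma^{n},\omega)]$; equivalently, to show
\begin{equation*}
\int_{\mathcal{X}}\log\!\Bigl(\sum_{n=1}^{N}w^{n}e^{-\eta\lambda(\gamma^{n}(x),\omega(x))}\Bigr)d\nu_{\omega}(x)\;\geq\;\log\sum_{n=1}^{N}w^{n}\exp\Bigl[-\eta\int_{\mathcal{X}}\lambda(\gamma^{n}(x),\omega(x))\,d\nu_{\omega}(x)\Bigr].
\end{equation*}
This I would obtain by Jensen's inequality applied to the convex log-sum-exp map $F(a_{1},\dots,a_{N}):=\log\sum_{n}w^{n}e^{a_{n}}$ with the random vector $a_{n}(x):=-\eta\,\lambda(\gamma^{n}(x),\omega(x))$ on $(\mathcal{X},\nu_{\omega})$: since $F$ is convex, $\mathbb{E}_{\nu_{\omega}}[F(a(x))]\geq F(\mathbb{E}_{\nu_{\omega}}[a(x)])$, which is exactly the displayed inequality. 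Exponentiating $-\eta$ times the combined chain produces the integral mixability inequality for $\lambda_{u,\mu}$ with substitution $\Sigma_{\lambda_{u,\mu}}$.

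The exp-concavity half is structurally identical: if $\Gamma$ is convex and $\lambda$ is $\eta$-exp-concave, then the pointwise aggregate $\overline{\gamma}(x)=\sum_{n}w^{n}\gamma^{n}(x)$ (which lies in $\mathcal{M}(\Gamma^{\mathcal{X}})$ since $\Gamma^{\mathcal{X}}$ inherits the convex structure pointwise) satisfies the same pointwise inequality, and then the Jensen step is reused verbatim. The main conceptual obstacle is recognizing that the passage from a pointwise mixability inequality to an integrated one requires an inequality in the \emph{opposite} direction of the standard Jensen bound $\int\log g\,d\nu\leq\log\int g\,d\nu$; the resolution is that the relevant convex function is not the one-variable $\log$ but the $N$-variable log-sum-exp acting on the log-values. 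The remaining subtleties are purely technical: measurability of the integrands (inherited from $\lambda$ being jointly measurable and $\gamma^{n},\omega$ being measurable), finiteness/well-definedness of the integrals (guaranteed by $\lambda\geq 0$, so that $e^{-\eta\lambda}\in(0,1]$ and $\log\sum_{n}w^{n}e^{-\eta\lambda}\in(-\infty,0]$), and the brief verification that $\overline{\gamma}$ is measurable, all of which I would dispatch in one paragraph before the main chain of inequalities.
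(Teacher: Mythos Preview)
Your proof is correct and follows the same overall architecture as the paper's: establish the pointwise mixability inequality, integrate against the probability measure $u_{\omega}\,d\mu_{\omega}$, and then swap the integral inside the log-sum. The only difference is in how that last step is justified. The paper invokes a named result, the \emph{Generalized H\"older Inequality} (their Theorem~\ref{generalized-holder}), as a black box and then specializes it to the case where the second measure space $(\mathcal{Y},\nu)$ is the finite expert set $\{1,\dots,N\}$ with weights $w^{n}$. You instead observe directly that, for finitely many experts, the required inequality is nothing more than Jensen's inequality for the convex map $F(a_{1},\dots,a_{N})=\log\sum_{n}w^{n}e^{a_{n}}$ applied on the probability space $(\mathcal{X},\nu_{\omega})$. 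These are literally the same inequality in this setting, so the two proofs are mathematically identical; your framing is arguably more elementary and self-contained, while the paper's has the advantage of pointing to a result that would also cover an infinite (even continuous) pool of experts. Your additional remarks on measurability of $\overline{\gamma}$ and well-definedness of the integrals are welcome hygiene that the paper leaves implicit.
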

 We emphasize that the suggested substitution function $\Sigma_{\lambda_{u,\mu}}$ is \textbf{independent} of both $u$ and $\mu$. Thus, the same substitution function attains efficient prediction for all possible $\mathcal{X}$-integral $\lambda$-loss functions (all admissible $\mu$ and $u$), which may be even chosen by an adversary after the prediction at each step is made.

To prove our main Theorem \ref{theorem-main}, we will need the following
\begin{theorem}[Generalized Holder Inequality]
\label{generalized-holder}
Let $(\mathcal{X},\mu)$ and $(\mathcal{Y},\nu)$ denote two $\sigma$-finite measure spaces. Let $f(x,y)$ be positive and measurable on $(\mathcal{X}\times \mathcal{Y}, \mu\times \nu)$ function, and $u(x), v(y)$ be weight functions and ${\int_{\mathcal{X}}u(x)d\mu(x)=1}$. Then 
\begin{eqnarray}\int_{\mathcal{Y}}\exp\bigg(\int_{\mathcal{X}}\log f(x,y)u(x)d\mu(x)\bigg)v(y)d\nu(y)\leq 
\nonumber
\\
\exp\bigg(\int_{\mathcal{X}}\log \bigg[\int_{\mathcal{Y}}f(x,y)v(y)d\nu(y)\bigg]u(x)d\mu(x)\bigg).
\label{generalized-holder-inequality}
\end{eqnarray}
\label{theorem-generalized-holder-inequality}
\end{theorem}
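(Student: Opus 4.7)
The plan is to divide through by the right-hand side so that the claim reduces to showing that a single non-negative integral is at most $1$, and then to invoke Jensen's inequality together with Fubini-Tonelli.

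Set $g(x) := \int_{\mathcal{Y}} f(x,y)\, v(y)\, d\nu(y)$. Using $\exp(a-b) = \exp(a)/\exp(b)$ and the fact that the outer exponential on the right of \eqref{generalized-holder-inequality} is constant in $y$, that inequality is equivalent to
\begin{equation*}
\int_{\mathcal{Y}} \exp\bigg(\int_{\mathcal{X}} u(x) \log \frac{f(x,y)}{g(x)}\, d\mu(x)\bigg)\, v(y)\, d\nu(y) \;\leq\; 1.
\end{equation*}
Since $u(x)\, d\mu(x)$ is a probability measure on $\mathcal{X}$ by hypothesis, Jensen's inequality applied to the convex function $\exp$ bounds the inner expression pointwise in $y$ by $\int_{\mathcal{X}} u(x) \frac{f(x,y)}{g(x)}\, d\mu(x)$. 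Integrating this upper bound against $v(y)\, d\nu(y)$ and swapping the order of integration via Fubini-Tonelli (the integrand is non-negative) collapses everything to $\int_{\mathcal{X}} \frac{u(x)}{g(x)}\, g(x)\, d\mu(x) = \int_{\mathcal{X}} u(x)\, d\mu(x) = 1$, which is exactly what is needed.

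The main obstacle is measure-theoretic bookkeeping: on the exceptional sets where $g(x) \in \{0, \infty\}$, or where $f(x,y) = 0$ so that $\log f(x,y) = -\infty$, the quotient $f/g$ and hence the Jensen step above are not literally well defined. The standard remedy is to regularize by replacing $f$ with $f + \varepsilon$ (and, if necessary, truncating by $f \wedge M$), establish the inequality in the regularized regime where all quantities are positive and finite, and then pass to the limit $\varepsilon \downarrow 0$ (and $M \uparrow \infty$) via monotone and dominated convergence. Once those technicalities are dispatched, the three-line chain above is the entire content of the proof.
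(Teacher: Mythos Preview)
Your argument is correct and is in fact the standard route to this inequality: normalize by the right-hand side, apply Jensen to the convex exponential against the probability measure $u\,d\mu$, then collapse via Fubini--Tonelli. The measure-theoretic caveats you flag (vanishing or infinite $g$) are real but, as you note, handled by routine regularization; moreover the theorem statement already assumes $f$ is positive, which removes the $\log 0$ issue.

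As for comparison with the paper: there is nothing to compare against. The paper does not prove Theorem~\ref{generalized-holder} at all; immediately after the statement it writes ``The proof can be found within the mentioned works'' and cites \cite{nikolova2017new,dunford1958linear,kwon1995extension}. Your three-line Jensen/Fubini argument is precisely the proof one finds in those references, so you have supplied what the paper outsourced.
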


\noindent An explicit discussion of inequality \eqref{generalized-holder-inequality} is provided in \cite{nikolova2017new}. Inequality \eqref{generalized-holder-inequality} is also known as \textbf{Continuous Form of Holder Inequality} by \cite{dunford1958linear} and \textbf{Extended Holder Inequality} by \cite{kwon1995extension}. The proof can be found within the mentioned works.
% Simplified versions of the inequality appear in \cite{nikolova1991some,nikolova2016continuous}. 
 
Now we prove our main Theorem \ref{theorem-main}. Consider the pool $\mathcal{N}$ of experts. Let $\gamma^{n}:\mathcal{X}\rightarrow \Gamma$ be their measurable  forecasts and 
 ${\alpha}^{1},\dots, {\alpha}^{N}$ be the experts' weights. We denote the forecast aggregated according to $\Sigma_{\lambda_{u,\mu}}$ by $\overline{\gamma}\in\mathcal{M}(\Gamma^{\mathcal{X}})$.
 
 In the following proof we will directly check that for every $\omega\in\mathcal{M}(\Omega^{\mathcal{X}})$ it holds true (in fact, for all admissible $u$ and $\mu$) that
 \begin{equation}
 \exp\big[-\eta \lambda_{u,\mu}(\overline{\gamma},\omega)\big]\geq \sum_{n=1}^{N}{\alpha}^{n}\exp(-\eta \lambda_{u,\mu}(\gamma^{n},\omega)),
 \label{integral-mixability-requirement}
 \end{equation}
 so that $\Sigma_{\lambda_{u,\mu}}$ is a proper substitution function and $\lambda_{u,\mu}$ is indeed $\eta$-mixable.

\begin{proof}

Choose any $\omega\in \mathcal{M}(\Omega^{\mathcal{X}})$. Since $\lambda$ is $\eta$-mixable with substitution function $\Sigma_{\lambda}$, for all $x\in \mathcal{X}$ we have
$$\exp\big[-\eta\lambda\big(\overline{\gamma}(x), \omega(x)\big)\big]\geq 
\sum_{n=1}^{N}{\alpha}^{n}\bigg[\exp\big[-\eta \lambda\big(
\gamma^{n}(x), \omega(x)\big)\big]\bigg]
.$$
We take the logarithm of both parts of the inequality and  for every $x\in \mathcal{X}$ obtain
$$
-\eta\lambda\big(\overline{\gamma}(x), \omega(x)\big)\geq \log\sum_{n=1}^{N}{\alpha}^{n}
\bigg[\exp \big[-\eta \lambda\big(\gamma^{n}(x), \omega(x)\big)\big]\bigg].
$$
We multiply both sides by $u_{\omega}(x)\geq 0$ and integrate over all $x\in \mathcal{X}$ w.r.t. measure $\mu_{\omega}$:
\begin{eqnarray}\underbrace{\int_{\mathcal{X}}\big[-\eta \lambda\big(\overline{\gamma}(x), \omega(x)\big)\big]u_{\omega}(x)d\mu_{\omega}(x)}_{-\eta \lambda_{u,\mu}(\overline{\gamma},\omega)}\geq
\nonumber
\\
\int_{\mathcal{X}}\log\sum_{n=1}^{N}{\alpha}^{n}\bigg[\exp \big[-\eta \lambda\big(\gamma^{n}(x), \omega(x)\big)\big]\bigg]u_{\omega}(x)d\mu_{\omega}(x)
\label{intemediate-mix}
\end{eqnarray}
The left part of inequality \eqref{intemediate-mix} equals to $-\eta \lambda_{u,\mu}(\overline{\gamma},\omega)$. Next, for $x\in\mathcal{X}$ and $n\in\mathcal{N}$ we define $$f(x,n):=\exp \big[-\eta \lambda\big(\gamma^{n}(x), \omega(x)\big)\big].$$
By applying the notation change and taking the exponent of both sides of \eqref{intemediate-mix}, we obtain

\begin{equation}
\exp\big[-\eta \lambda_{u,\mu}(\overline{\gamma},\omega)\big]\geq \exp\bigg(\int_{\mathcal{X}}\log \big[\sum_{n=1}^{N}{\alpha}^{n}\cdot f(x,n)\big] u_{\omega}(x)d\mu_{\omega}(x)\bigg)
\label{exp-intemediate-mix}
\end{equation}
The final step is to apply Theorem \ref{theorem-generalized-holder-inequality}, i.e. Generalized Holder inequality \eqref{generalized-holder}. In the notation of Theorem we use $\mathcal{Y}:=\mathcal{N}$, $v(y)\equiv 1$, $\nu(y):={\alpha}^{y}$ and obtain
\begin{eqnarray}\exp\bigg(\int_{\mathcal{X}}\log \big[\sum_{n=1}^{N}{\alpha}^{n}\cdot f(x,n)\big] u_{\omega}(x)d\mu_{\omega}(x)\bigg)\geq
\nonumber
\\
\sum_{n=1}^{N}{\alpha}^{n}\exp\bigg(\int_{\mathcal{X}}\log\big(f(x,n)\big)u_{\omega}(x)d\mu(x)\bigg)=
\nonumber
\\
\sum_{n=1}^{N}{\alpha}^{n}\bigg[\exp\bigg(\int_{\mathcal{X}}\underbrace{\log\big(f(x,n)\big)}_{-\eta \lambda(\gamma^{n}(x),\omega(x))}u_{\omega}(x)d\mu_{\omega}(x)\bigg)\bigg]=
\nonumber
\\
\sum_{n=1}^{N}{\alpha}^{n}\bigg[\exp\bigg(-\eta\underbrace{\int_{\mathcal{X}}\lambda(\gamma^{n}(x),\omega(x))u(x)dx}_{\lambda_{u,\mu}(\gamma^{n},\omega)}\bigg)\bigg]=
\nonumber
\\
\sum_{n=1}^{N}{\alpha}^{n}\exp(-\eta \lambda_{u,\mu}(\gamma^{n},\omega))
\label{holder-to-mix}
\end{eqnarray}
Now we combine \eqref{holder-to-mix} with \eqref{exp-intemediate-mix} and obtain the desired inequality \eqref{integral-mixability-requirement}.
\end{proof}

% From the proof it is clearly seen that in the context of function-valued forecasting, the aggregated forecast for function can be built by aggregating point-wise forecasts of experts. Thus, the possible substitution function $\Sigma_{\lambda_{u},\eta}$ for $\lambda_{u}$ simply equals to point-wise (per $x\in [a,b]$) application of $\Sigma_{\lambda,\eta}$ to experts forecasts.

% In the proof and in the discussion in general we inaccurately skipped the questions of some functions being measurable and existence of certain integrals. Here we would like to point that these details may significantly depend on the spaces $\Omega,\Gamma$.

% For example, for square loss $\lambda(\gamma,\omega)$ defined for $\Omega=[-B,B]$ the theorems holds true if all $\gamma_{t}^{n}, \omega\in \mathcal{L}^{2}([a,b])$ and bounded by $B$ in their absolute value. Also, here the aggregating function also must be measurable w.r.t. $u$, which also may not hold true for an arbitrary substitution function (e.g. discontinuous). However, in practice we usually use specific substitution function, which has exact analytic form and consists of simple arithmetic operations on experts forecasts (sum, log, exp, etc.), thus, typically it should be measurable. We will specify the more accurate form of the Theorem in the next version of the paper.

\section{Forecasting of Probability Distributions}
\label{sec-forecasting-probs}
In this section, we consider online probabilistic forecasting. At each step of the game experts provide forecasts as probability distributions on $\mathcal{X}$. The learner has to aggregate forecasts into a single forecast being a probability distribution on $\mathcal{X}$. Next, the true probability distribution, probably empirical, is revealed. Both the experts and the learner suffer losses using a loss function.

We analyse loss functions which are widely used for comparing probability distributions, show that they are actually integral loss functions and prove their mixability (exp-concavity). The results are summarised in Table 1 of Subsection \ref{sec-table}. Each following subsection is devoted to a particular loss function.

\subsection{Mixability \& Exp-concavity Table}
\label{sec-table}

Our results on mixability (exp-concavity) of common loss functions used to compare probability distributions are summarised in Table 1. The column ``\textbf{empirical outcomes}'' indicates whether it is possible to use the loss function to compare the predicted distribution $\gamma$ (typically continuous) with a discrete outcome $\omega$ (empirical distribution).

\begin{table}[h]
\tiny
\centering
\begin{tabular}{|c|c|c|c|c|}
\hline
\multirow{2}{*}{\textbf{Loss Function}}                                                         & \multirow{2}{*}{\textbf{Specifications}}                                                                                                                                                                                                                & \multirow{2}{*}{\textbf{\begin{tabular}[c]{@{}c@{}}Empirical\\ Outcomes\end{tabular}}} & \multicolumn{2}{c|}{\textbf{Aggregating Rule}}                                                                                                                                                                          \\ \cline{4-5} 
                                                                                                &                                                                                                                                                                                                                                                         &                                                                                        & \textbf{Exp-concavity}                                                                                        & \textbf{Mixability}                                                                                     \\ \hline
\begin{tabular}[c]{@{}c@{}}Continuous Ranked\\ Probability Score\\ (CRPS)\end{tabular}         & \begin{tabular}[c]{@{}c@{}}Borel distributions on\\ $\mathcal{X}=\prod_{d=1}^{D}[a_{d},b_{d}]$\end{tabular}                                                                                                                                                   & Possible                                                                                    & \begin{tabular}[c]{@{}c@{}}Mixture for\\ $\eta=\frac{1}{2\prod_{d=1}^{D}(b_{d}-a_{d})}$\end{tabular}          & \begin{tabular}[c]{@{}c@{}}Formula for\\ $\eta=\frac{2}{\prod_{d=1}^{D}(b_{d}-a_{d})}$\end{tabular}          \\ \hline
\begin{tabular}[c]{@{}c@{}}Sliced Continuous\\ Ranked Probability\\ Score (SCRPS)\end{tabular} & \begin{tabular}[c]{@{}c@{}}Borel distributions on\\ $\mathcal{X}\subset \text{Ball}_{\mathbb{R}^{D}}(0, R)$\end{tabular}                                                                                                                                      & Possible                                                                                    & \begin{tabular}[c]{@{}c@{}}Mixture for\\ $\eta=\frac{1}{8R}$\end{tabular}                              & \begin{tabular}[c]{@{}c@{}}No closed form,\\ $\eta=\frac{1}{2R}$\end{tabular}                        \\ \hline
\begin{tabular}[c]{@{}c@{}}Energy-Based\\ Distance ($\mathcal{E}$)\end{tabular} & \begin{tabular}[c]{@{}c@{}}Borel distributions on\\ $\mathcal{X}\subset \text{Ball}_{\mathbb{R}^{D}}(0, R)$\end{tabular}                                                                                                                                      & Possible                                                                                    & \begin{tabular}[c]{@{}c@{}}Mixture for\\ $\eta=\frac{S_{D-2}}{8R(D-1)S_{D-1}}$\end{tabular}                              & \begin{tabular}[c]{@{}c@{}}No closed form,\\ $\eta=\frac{S_{D-2}}{2R(D-1)S_{D-1}}$\end{tabular}                        \\ \hline
\begin{tabular}[c]{@{}c@{}}Kullback–Leibler\\ divergence (KL)\end{tabular}                      & \begin{tabular}[c]{@{}c@{}}Distributions on\\ $(\mathcal{X},\sigma,\mu)$ with\\ non-zero density\end{tabular}                                                                                                                                           & \begin{tabular}[c]{@{}c@{}}Reduces to\\ Log-loss\end{tabular}                                                                                & \multicolumn{2}{c|}{Mixture for $\eta=1$}                                                                                                                                                                               \\ \hline
\begin{tabular}[c]{@{}c@{}}Beta-2 divergence\\ ($\mathcal{B}_{2}$)\end{tabular}                                                                               & \begin{tabular}[c]{@{}c@{}}Distributions on\\ $(\mathcal{X},\sigma,\mu)$ with $\|\mu\|_{1}<\infty$\\ and $M$-bounded density\end{tabular}                                                                                                                  & If $|\mathcal{X}|<\infty$                                                                                     & \begin{tabular}[c]{@{}c@{}}Mixture for\\ $\eta=\frac{1}{2\|\mu\|_{1}M^{2}}$\end{tabular}                            & \begin{tabular}[c]{@{}c@{}}No closed form,\\ $\eta=\frac{2}{\|\mu\|_{1}M^{2}}$\end{tabular}                 \\ \hline
\begin{tabular}[c]{@{}c@{}}Characteristic\\ function\\ discrepancy (CFD)\end{tabular}            & \begin{tabular}[c]{@{}c@{}}Borel distributions\\ on $\mathcal{X}\subset\mathbb{R}^{D}$\end{tabular}                                                                                                                                                           & Possible                                                                                    & \begin{tabular}[c]{@{}c@{}}Mixture for\\ $\eta=\frac{1}{8}$\end{tabular}                                      & \begin{tabular}[c]{@{}c@{}}No closed form,\\ $\eta=\frac{1}{4}$\end{tabular}                          \\ \hline
\begin{tabular}[c]{@{}c@{}}Maximum mean\\ discrepancy (MMD)\end{tabular}            & \begin{tabular}[c]{@{}c@{}}Borel distributions\\ on $\mathcal{X}\subset\mathbb{R}^{D}$,\\Positive definite kernel\\$k(x,y)=\psi(x-y)$ for\\positive definite function\\$\psi(x)=\int_{\mathbb{R}^{D}}e^{-i\langle x,t\rangle}d\mu(t)$\end{tabular}                                                                                                                                                           & Possible                                                                                    & \begin{tabular}[c]{@{}c@{}}Mixture for\\ $\eta=\frac{1}{8\|\mu\|_{1}}$\end{tabular}                                      & \begin{tabular}[c]{@{}c@{}}No closed form,\\ $\eta=\frac{1}{4\|\mu\|_{1}}$\end{tabular}                          \\ \hline
\begin{tabular}[c]{@{}c@{}}1-dimensional\\ optimal transport\\ cost (OT)\end{tabular}           & \begin{tabular}[c]{@{}c@{}}Borel distributions\\ on $\mathcal{X}\subset\mathbb{R}$;\\ $\eta$-mixable\\ (exp-concave)\\ cost $c:\mathcal{X}\times\mathcal{X}\rightarrow \mathbb{R}$\\ satistying $\frac{\partial^{2} c}{\partial x\partial x'}<0$\end{tabular} & Possible                                                                                    & \begin{tabular}[c]{@{}c@{}}Wasserstein-2\\ barycenter\end{tabular}                                            & \begin{tabular}[c]{@{}c@{}}Mixed quantile\\for monotone\\ substitution $\Sigma_{c}$;\\No closed form\\for arbitrary\\substitution,\\but can be modeled\\implicitly (Lemma \ref{lemma-implicit})\end{tabular} \\ \hline
\begin{tabular}[c]{@{}c@{}}Sliced \\ Wasserstein-2\\ distance ($\text{SW}_{2}$)\end{tabular}              & \begin{tabular}[c]{@{}c@{}}Radon distributions on\\ $\mathcal{X}\subset \text{Ball}_{\mathbb{R}^{D}}(0, R)$,\\scaled and translated\\copies of each other.\end{tabular}                                                                                                                    & Possible                                                                                    & \begin{tabular}[c]{@{}c@{}}Sliced\\ Wasserstein-2\\ barycenter for\\ $\eta=\frac{1}{8R^{2}}$\\(Lemma \ref{lemma-scale-rot-bar})\end{tabular} & \begin{tabular}[c]{@{}c@{}}Unknown,\\ $\eta=\frac{1}{2R^{2}}$?\\(see Subsection \ref{sec-sliced-ot})\end{tabular}              \\ \hline
\end{tabular}
\label{mix-table}
\caption{Mixability \& exp-concavity of various loss functions used for assessing probabilitic forecasts.}
\end{table}

\subsection{Continuous Ranked Probability Score}
\label{sec-crps}

\subsubsection{One-Dimensional Case}

Let $\mathcal{X}=[a,b]\subset \mathbb{R}$ and assume that $\Gamma=\Omega$ is the space of all probability measures over Borel field of $\mathcal{X}$. In this case \textbf{Continuous Ranked Probability Score} (CRPS) by \cite{matheson1976scoring} is widely used for comparing probability distributions:
\begin{equation}
\text{CRPS}(\gamma,\omega)=\int_{a}^{b}|\text{CDF}_{\gamma}(x)-\text{CDF}_{\omega}(x)|^{2}dx,
\label{crps-definition}
\end{equation}
where by $\text{CDF}_{\upsilon}:\mathcal{X}\rightarrow[0,1]$ we denote cumulative distribution function of probability distribution $\upsilon\in\Gamma$, i.e. ${\text{CDF}_{\upsilon}(x)=\upsilon([a,x])}$ for all $x\in[a,b]$. We visualize CRPS in the following Figure \ref{fig:crps}.

\begin{figure}[!h]
     \centering
     \begin{subfigure}[b]{0.48\columnwidth}
         \centering
         \includegraphics[width=\linewidth]{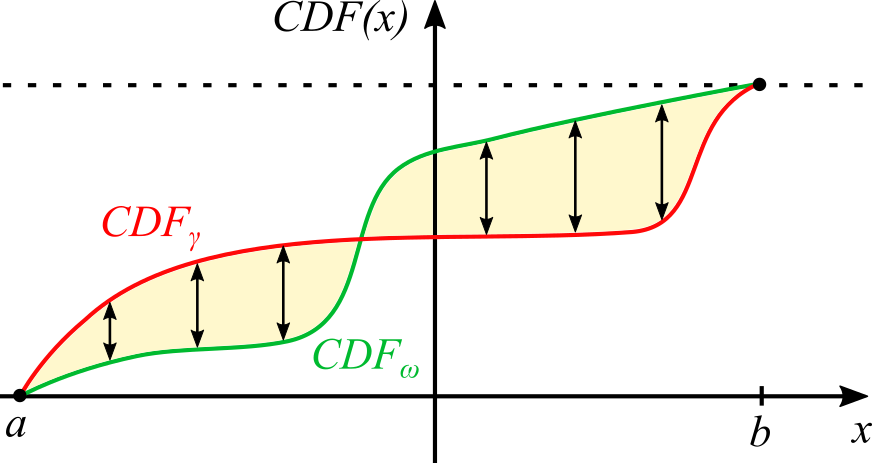}
        \caption{Arbitrary distribution $\omega$.}
     \end{subfigure}
    \begin{subfigure}[b]{0.48\columnwidth}
        \centering
    \includegraphics[width=\linewidth]{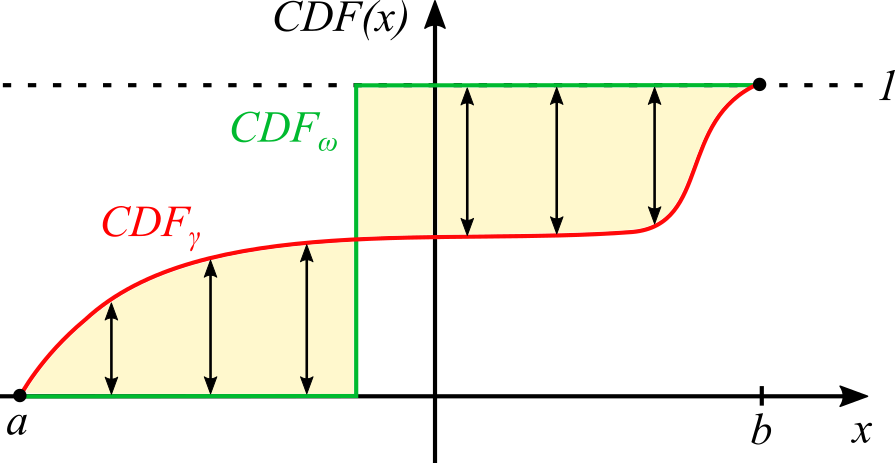}
    \caption{Empirical (Dirac) distribution $\omega$.}
     \end{subfigure}
    \caption{Visualization of the comparison of CDFs of distributions $\gamma,\omega$ on $[a,b]$ by using Continuous Ranked Probability Score.}
    \label{fig:crps}
\end{figure}

In practice, CRPS is used, for example, for assessing the quality of forecasts of different weather-related variables \cite{zamo2018estimation}.
%hersbach2000decomposition,pinson2012evaluating,trinh2013reduction,
As we know, so far CRPS is the only non-trivial loss on continuous outcomes which is already known to be mixable. Indeed, \cite{v2019online,dzhamtyrova2019competitive} proved the mixability of CRPS but only for the case when the outcome $\omega$ is a Dirac distribution $\omega=\delta_{x_{0}}$ for some $x_{0}\in[a,b]$.

Within our framework CRPS equals to the integral square loss on $[a,b]$ with density $u_{\omega}(x)\equiv \frac{1}{b-a}$ and Lebesgue measure $\mu_{\omega}$, multiplied by $(b-a)$:
\begin{equation}\text{CRPS}(\gamma,\omega)=(b-a)\underbrace{\int_{a}^{b}|\text{CDF}_{\gamma}(x)-\text{CDF}_{\omega}(x)|^{2}\overbrace{\frac{1}{b-a}}^{u_{\omega}(x)}dx}_{\text{Integral squared loss}}.
\label{crps-to-integral}
\end{equation}
{For all $x\in[a,b]$ it holds true that $\text{CDF}_{\gamma}(x), \text{CDF}_{\omega}(x)\in [0, 1]$. The function ${|\text{CDF}_{\gamma}(x)-\text{CDF}_{\omega}(x)|^{2}}$ is a square loss function ${[0,1]\times[0,1]\rightarrow \mathbb{R}}$ of inputs $\text{CDF}_{\gamma},\text{CDF}_{\omega}$. It is $2$-mixable ($\frac{1}{2}$-exp-concave), see Example \ref{square-loss-example}.} Thus, by Theorem \ref{theorem-main} the corresponding integral loss on $[a,b]$ is also $2$-mixable ($\frac{1}{2}$-exp-concave). We conclude that CRPS is mixable (exp-concave) but with $(b-a)$ times lower learning rate, i.e. $\eta=\frac{2}{(b-a)}$ (or $\eta=\frac{1}{2(b-a)}$ for exp-concavity). 

For $\frac{2}{(b-a)}$-mixable CRPS the aggregated prediction $\overline{\gamma}$ is given by its CDF:
\begin{equation}\overline{\text{CDF}}(x)=\frac{1}{2}+\frac{1}{4}\log \frac{\sum_{n=1}^{N}{\alpha}^{n}\exp[-2\big(1-\text{CDF}_{\gamma^{n}}(x)\big)^{2}]}{\sum_{n=1}^{N}{\alpha}^{n}\exp[-2\big(\text{CDF}_{\gamma^{n}}(x)\big)^{2}]},
\label{crps-substitution}
\end{equation}
for weights ${({\alpha}^{1},\dots,{\alpha}^{N})\in\Delta_{N}}$ and experts' predictions $\gamma^{1},\dots,\gamma^{N}\in\Gamma$. It equals to the point-wise application of $\Sigma_{L^{2}}^{[0,1]}$ (defined by \eqref{squared-loss-substitution}) to experts' forecasts CDF's. One may perform straightforward check to verify that the resulting aggregated function \eqref{crps-substitution} is indeed a CDF. 
% The aggregated function is indeed a cumulative distribution function. The numerator of the expression under the logarithm is an (non-strictly) increasing function of $x$, whereas the denominator is a (non-strictly) decreasing function. Thus, the resulting function is monotone. It also satisfies $\Sigma_{CRPS}(a)=0$ and $\Sigma_{CRPS}(b)=1$.

For $\frac{1}{2(b-a)}$-exponentially concave CRPS we have
\begin{equation}
\overline{\text{CDF}}(x)={\sum_{n=1}^{N}{\alpha}^{n}\cdot \text{CDF}_{\gamma_{n}}(x)},
\label{crps-wa-substitution}
\end{equation}
which means that aggregated prediction $\overline{\gamma}$ is a mixture of probability distributions $\gamma^{n}$ w.r.t. weights ${\alpha}^{n}$.

\subsubsection{Multi-Dimensional Case}
\label{sec-scrps}
Let $\mathcal{X}\subset \mathbb{R}^{D}$ be a compact subset. Assume that $\Gamma=\Omega$ is the space of all probability measures over Borel field of $\mathcal{X}$. One may naturally extend CRPS formula \eqref{crps-definition} for comparing multi-dimensional distributions $\gamma,\omega$ on $\mathcal{X}$:
\begin{equation}\text{CRPS}(\text{CDF}_{\gamma},\text{CDF}_{\omega})=\int_{\mathcal{X}}|\text{CDF}_{\gamma}(x)-\text{CDF}_{\omega}(x)|^{2}dx,
\label{crps-multidim-definition}
\end{equation}
where $\text{CDF}_{\upsilon}:\mathcal{X}\rightarrow [0,1]$ denotes CDF of multi-dimensional $\upsilon\in\Gamma$. Similar to the analysis of the previous subsection, one may prove that if $\mathcal{X}\subset \prod_{d=1}^{D}[a_{d},b_{d}]$, then the loss \eqref{crps-multidim-definition} is ${\big[2\cdot \big(\prod_{d=1}^{D}(b_{d}-a_{d})\big)^{-1}\big]}$-mixable (and $4$-times lower exp-concave), and obtain an analogue to aggregation rule \eqref{crps-substitution} for mixability and \eqref{crps-wa-substitution} for exp-concavity. The analogue to \eqref{crps-wa-substitution} is straightforward: the aggregated distribution is a mixture. However, the substitution analogue to \eqref{crps-substitution} is hard to use in high-dimensional spaces, e.g. to sample from the distribution or to calculate moments.
% , although modern Monte Carlo Markov Chain methods can be used, see \cite{brooks2011handbook}.
Besides, definition \eqref{crps-multidim-definition} is not symmetric to probability measure rotations and is sensitive to the choice of the coordinate system.

To overcome above-mentioned issues, we propose to use \textbf{Sliced Continuous Ranked Probability Score} (SCRPS) which reduces to estimation of multiple one-dimensional CRPS and is invariant to the choice of the coordinate system. First, for all $\upsilon\in\Gamma$ we define $\text{SCDF}: \mathbb{S}^{D-1}\times \mathbb{R}\rightarrow [0,1]$ by
$$\text{SCDF}_{\upsilon}(\theta,t)=\upsilon\big(\{x\in\mathcal{X}\text{ }|\text{ }\langle x,\theta\rangle\leq t\}\big),$$
where $\theta\in \mathbb{S}^{D-1}=\{x\in \mathbb{R}^{D}\wedge \|x\|_{2}=1\}$ is a unit sphere. Function $\text{SCDF}_{\upsilon}(\theta,\cdot)$ is a CDF of the distribution $\upsilon$ projected on to the line orthogonal to $\langle x,\theta\rangle=0$. Let $S_{D-1}$ be the surface area of $(D-1)$-dimensional unit sphere. Now we define SCRPS:
\begin{equation}\text{SCRPS}(\gamma,\omega)=\frac{1}{S_{D-1}}\int_{\mathbb{S}^{D-1}}\bigg[\int_{-\infty}^{\infty}|\text{SCDF}_{\gamma}(\theta,t)-\text{SCDF}_{\omega}(\theta,t)|^{2}dt\bigg]d\theta.
\label{scrps-definition}
\end{equation}
From the definition we see that SCRPS is the average over all the sliced CRPS scores. Thus, similar to \textbf{Sliced Wasserstein distances} \cite{kolouri2018sliced},
%deshpande2018generative,,bonneel2015sliced
its stochastic computation in practice can be efficiently preformed via projections on random directions ${\theta\in \mathbb{S}^{D-1}}$.

Let us prove that SCRPS is mixable (exp-concave) for bounded $\mathcal{X}\subset \text{Ball}_{\mathbb{R}^{D}}(0,R)$. In this case limits $\pm \infty$ of the inner integral in \eqref{scrps-definition} are replaced by $\pm R$ respectively. We have

\begin{eqnarray}\text{SCRPS}(\gamma,\omega)=
\nonumber
\\
2R\underbrace{\int_{\mathbb{S}^{D-1}\times [-R,R]}|\text{SCDF}_{\gamma}(\theta,t)-\text{SCDF}_{\omega}(\theta,t)|^{2}\frac{1}{2RS_{D-1}}d\theta dt}_{\text{Integral squared loss}},
\label{scrps-to-integral}
\end{eqnarray}
which similar to \eqref{crps-to-integral} reduces SCRPS to a multiple integral of the squared loss, which is $2$-mixable ($\frac{1}{2}$-exp-concave) on $[0,1]$. Thus, the integral squared loss is $2$-mixable ($\frac{1}{2}$-exp-concave) by Theorem \ref{theorem-main}. We conclude that SCRPS is also mixable (exp-concave) but with $2R$ lower learning rate.

For $\frac{1}{2R}$-exp-concave SCRPS the aggregated prediction $\overline{\gamma}$ is given by its sliced CDF:
\begin{equation}\overline{\text{SCDF}}(\theta,t)=\sum_{n=1}^{N}{\alpha}^n \cdot\text{SCDF}_{\gamma^{n}}(\theta,t)
\label{scrps-exp-substitution}\end{equation}
for experts' predictions $\gamma^{1},\dots,\gamma^{N}\in\Gamma$ and weights ${({\alpha}^{1},\dots,{\alpha}^{N})\in\Delta_{N}}$. The mixture $\overline{\gamma}$ of probability distributions $\gamma^{n}$ w.r.t. weights ${\alpha}^{n}$ can be used as an aggregated prediction because its sliced CDF equals \eqref{scrps-exp-substitution}.
% The resulting $\overline{\text{SCDF}}$ is a Sliced CDF of a mixture of probability distributions $\gamma^{n}$ w.r.t. weights ${\alpha}^{n}$. Since SCRPS is a metric on the space of probability distributions, this mixture $\overline{\gamma}$ is the only distribution which has $\overline{\text{SCRPS}}$ as its sliced distribution function.

For $\frac{2}{R}$-mixable SCRPS the aggregated prediction's sliced CDF should satisfy
\begin{equation}\overline{\text{SCDF}}(\theta,t)=\frac{1}{2}+\frac{1}{4}\log \frac{\sum_{n=1}^{N}{\alpha}^{n}\exp[-2\big(1-\text{SCDF}_{\gamma^{n}}(\theta,t)\big)^{2}]}{\sum_{n=1}^{N}{\alpha}^{n}\exp[-2\big(\text{SCDF}_{\gamma^{n}}(\theta,t)\big)^{2}]}.
\label{scrps-substitution}
\end{equation}
However, we do not know whether $\overline{\text{SCDF}}(\theta,t)$ is necessarily a \textbf{sliced CDF} of some distribution $\overline{\gamma}$. To fix this issue, one may consider the \textbf{projection trick} and define
$$\overline{\gamma}=\argmin_{\gamma\in\Gamma}\bigg(\frac{1}{S_{D-1}}\int_{\mathbb{S}^{D-1}}\bigg[\int_{-\infty}^{\infty}|\text{SCDF}_{\gamma}(\theta,t)-\overline{\text{SCDF}}(\theta,t)|^{2}dt\bigg]d\theta\bigg).$$
% To remove doubts on possibility of mixing SCRPS, we note that SCRPS loss can be viewed as a squared norm of difference of SCDFs in Hilbert Space $\mathcal{L}^{2}:=\mathcal{L}^{2}(S^{D-1}\times [-R,R])$ of quadratically integrable functions. We define $\text{SCDF}_{\Gamma}=\{\text{SCDF}_{\gamma}|\gamma\in\Gamma\}$ and note that $\text{SCDF}_{\Gamma}\subset \mathcal{L}^{2}$ and ${\overline{\text{SCDF}}\in \mathcal{L}^{2}}$. Moreover, $\text{SCDF}_{\Gamma}$ is a closed convex subset of $\mathcal{L}^{2}$. Thus, by Hilbert Projection Theorem, see \cite{rudin2006real}, there exists a unique projection $\text{SCDF}_{\overline{\gamma}}=\Pi(\overline{\text{SCDF}}, \text{SCDF}_{\Gamma})$ of $\overline{\text{SCDF}}$ to set $\text{SCDF}_{\Gamma}$ of sliced CDFs on $\mathcal{X}$. In particular, for all ${\text{SCDF}_{\omega}\in\text{SCDF}_{\Gamma}}$ the following holds true: $$\|\text{SCDF}_{\omega}-\overline{\text{SCDF}}\|_{\mathcal{L}^{2}}^{2}\geq \|\text{SCDF}_{\omega}-\text{SCDF}_{\overline{\gamma}}\|_{\mathcal{L}^{2}}^{2}=\text{SCRPS}(\omega,\overline{\gamma}).$$
% This means that $\text{SCDF}_{\overline{\gamma}}$ provides not greater SCRPS loss w.r.t. every $\omega\in\Omega=\Gamma$ than $\text{SCDF}_{\overline{\gamma}}$, thus, it can be used as the aggregated SCDF too. Meanwhile, it corresponds to some distribution $\overline{\gamma}\in\Gamma$ by the definition of $\text{SCDF}_{\Gamma}$. We call the described method by the \textbf{projection trick}.
SCRPS is a squared norm of difference of SCDFs in ${\mathcal{L}^{2}:=\mathcal{L}^{2}(\mathbb{S}^{D-1}\times [-R,R])}$ of quadratically integrable functions (w.r.t. the product of uniform probability measure on $\mathbb{S}^{D-1}$ and Lebesgue measure on $[-R,R]$). Thus, $\text{SCDF}_{\overline{\gamma}}$ can be viewed as the projection of $\overline{\text{SCDF}}$ on to the convex subset of sliced cumulative distribution functions. Hence, for all $\omega\in\Omega$ it satisfies  $$\|\text{SCDF}_{\omega}-\overline{\text{SCDF}}\|_{\mathcal{L}^{2}}^{2}\geq \|\text{SCDF}_{\omega}-\text{SCDF}_{\overline{\gamma}}\|_{\mathcal{L}^{2}}^{2}=\text{SCRPS}(\omega,\overline{\gamma}),$$
i.e. distribution $\overline{\gamma}$ can be used as the aggregated prediction.

Although the projection trick potentially allows to obtain admissible aggregated SCDF and the corresponding distribution, we do not know whether in a general practical case the computation of $\overline{\gamma}$ is feasible for SCRPS. Thus, from the computational point of view, formula \eqref{scrps-exp-substitution} should be preferred despite the fact it provides $4$-times lower learning rate.

% The open question to which we do not know the answer in whether \eqref{scrps-definition} is mixable (i.e. provides $4$ times higher learning rate than exp-concavity). Clearly, the aggregated prediction $F_{\overline{\gamma}}(\theta,x)$ can be obtained by an analogue to the formula \eqref{squared-loss-substitution}, but not necessary a probability distribution $\overline{\gamma}$ exists which admits this sliced cumulative probability distribution function. Even if it exists, computation of such an aggregated prediction may be intractable in practice.

% \subsubsection{Additional Remarks}
% We considered basic CRPS which compares probability distributions by taking integral of point-wise squared difference between cumulative density functions. It is important to note few important further extensions. First, it is pretty straightforward to obtain similar results for any other mixable (exp-concave) point-wise loss between cumulative density functions. Secondly, it is also possible to consider CRPS integral \eqref{crps-definition} with some given density on $[a,b]$. Thirdly, in Sliced CRPS score one may consider not all possible slices $\theta\in S^{D-1}$, but an arbitrary set of slices, e.g. only $D$ projections on with vectors $\bm{\theta}$ of the form $(0,0,\dots,1,\dots,0,1)$, i.e. marginal distributions. Our framework clearly admits all these extensions and makes it possible to easily prove mixability (exp-concavity) of obtained loss functions.

\subsection{Energy-Based Distance}

Let $\mathcal{X}\subset \mathbb{R}^{D}$ be a compact subset and $\Gamma=\Omega$ is the space of all probability measures over Borel field of $\mathcal{X}$. For $\gamma,\omega\in\Gamma$ we consider \textbf{Energy-Based Distance}:
\begin{equation}
    \mathcal{E}(\gamma,\omega)=2\cdot \mathbb{E}_{\gamma\times\omega}\|x-y\|_{2}-\mathbb{E}_{\gamma\times\gamma}\|x-x'\|_{2}-\mathbb{E}_{\omega\times\omega}\|y-y'\|_{2}.
    \label{energy-based-distance}
\end{equation}
It is absolutely non trivial to show that $\mathcal{E}$ is actually defines a metric on the space of probability distributions, see e.g. \cite{rizzo2016energy} for explanations. Formula \eqref{energy-based-distance}
%szekely2003statistics,
naturally admits unbiased estimates from empirical samples, which makes Energy-Based distance attractive for the usage in generative models, see \cite{bellemare2017cramer}. 

% Also, in practice Energy-Based distance is used for statistical hypothesis testing, see \cite{rizzo2016energy}.
%szekely2003statistics,

Note that at the first glance Energy-based distance defines $\mathcal{L}^{1}$-like metric because it operates with distances in $\mathcal{X}\subset \mathbb{R}^{D}$ rather than their squares. Since $\mathcal{L}^{1}$-loss is not mixable (unlike squared loss $\mathcal{L}^{2}$), it is reasonable to expect that $\mathcal{E}$ is also not mixable (exp-concave). 

Surprisingly, Energy-based distance equals SCRPS up to a constant which depends on the dimension $D$. Thus, Energy-based distance is mixable (exp-concave). Equivalence is known for $D=1$, see \cite{rizzo2016energy}.
%szekely2003statistics,
We prove the equivalence for arbitrary ${D\geq 1}$. 
\begin{theorem}[Equivalence of SCRPS and Energy-Based Distance]Let $\gamma,\omega$ be two Borel probability measures on $\mathcal{X}=\mathbb{R}^{D}$ with finite first moments, i.e. $\mathbb{E}_{X\sim\gamma}|X|, \mathbb{E}_{Y\sim\gamma}|Y|<\infty$. Then for $D>1$
$$\mathcal{E}(\gamma,\omega)=(D-1)\frac{S_{D-1}}{S_{D-2}}\text{\normalfont SCRPS}(\gamma,\omega).$$
\label{thm-energy-scrps}
\end{theorem}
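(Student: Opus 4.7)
The plan is to reduce the $D$-dimensional identity to the known one-dimensional identity $\mathcal{E}_{1}(\mu,\nu)=2\int_{\mathbb{R}}\bigl(F_{\mu}(t)-F_{\nu}(t)\bigr)^{2}dt$ via slicing. Two ingredients drive this: a classical integral representation of the Euclidean norm as an average of absolute projections, and Fubini's theorem applied under the finite first moment hypothesis.

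First, I would establish (or cite) the one-dimensional identity: for any Borel probability measures $\mu,\nu$ on $\mathbb{R}$ with finite first moments, $\mathcal{E}_{1}(\mu,\nu)=2\int_{\mathbb{R}}(F_{\mu}(t)-F_{\nu}(t))^{2}dt$. This follows from writing $|a-b|=\int_{\mathbb{R}}\bigl|\mathbb{1}_{t<a}-\mathbb{1}_{t<b}\bigr|\,dt$ and computing the three expectations $\mathbb{E}|X-Y|$, $\mathbb{E}|X-X'|$, $\mathbb{E}|Y-Y'|$ in terms of $F,G$, then simplifying to $2\int(F-G)^{2}dt$. Second, I would prove the norm representation
\begin{equation*}
\|v\|_{2}=\frac{D-1}{2S_{D-2}}\int_{\mathbb{S}^{D-1}}|\langle v,\theta\rangle|\,d\theta\qquad(v\in\mathbb{R}^{D}),
\end{equation*}
by rotational invariance reducing to $v=e_{1}$ and computing $\int_{\mathbb{S}^{D-1}}|\theta_{1}|\,d\theta=S_{D-2}\int_{0}^{\pi}|\cos\phi|\sin^{D-2}\phi\,d\phi=\frac{2S_{D-2}}{D-1}$ via spherical coordinates.

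Then I would substitute the norm representation into $\mathcal{E}(\gamma,\omega)$ and apply Fubini (justified by the finite first moment assumption, which makes each of the three double integrals absolutely convergent) to swap $\int_{\mathbb{S}^{D-1}}$ with the expectations. For each fixed $\theta$, the resulting integrand is exactly $2\,\mathbb{E}|\langle X-Y,\theta\rangle|-\mathbb{E}|\langle X-X',\theta\rangle|-\mathbb{E}|\langle Y-Y',\theta\rangle|=\mathcal{E}_{1}(\gamma_{\theta},\omega_{\theta})$, where $\gamma_{\theta},\omega_{\theta}$ are the pushforwards of $\gamma,\omega$ under $x\mapsto\langle x,\theta\rangle$. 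Since the CDFs of these pushforwards are precisely $\mathrm{SCDF}_{\gamma}(\theta,\cdot)$ and $\mathrm{SCDF}_{\omega}(\theta,\cdot)$, the 1D identity gives
\begin{equation*}
\mathcal{E}_{1}(\gamma_{\theta},\omega_{\theta})=2\int_{\mathbb{R}}\bigl|\mathrm{SCDF}_{\gamma}(\theta,t)-\mathrm{SCDF}_{\omega}(\theta,t)\bigr|^{2}dt.
\end{equation*}

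Finally, combining the two computations yields
\begin{equation*}
\frac{2S_{D-2}}{D-1}\,\mathcal{E}(\gamma,\omega)=\int_{\mathbb{S}^{D-1}}\mathcal{E}_{1}(\gamma_{\theta},\omega_{\theta})\,d\theta=2S_{D-1}\,\mathrm{SCRPS}(\gamma,\omega),
\end{equation*}
which rearranges into the claimed identity $\mathcal{E}(\gamma,\omega)=(D-1)\frac{S_{D-1}}{S_{D-2}}\mathrm{SCRPS}(\gamma,\omega)$. The main obstacle is not algebraic but bookkeeping: one must (i) correctly compute the spherical constant $\int_{\mathbb{S}^{D-1}}|\theta_{1}|\,d\theta$ and (ii) verify that the exchange of $\int_{\mathbb{S}^{D-1}}$ with the product expectations $\mathbb{E}_{\gamma\times\omega}$, etc., is legitimate; the latter is guaranteed by $\mathbb{E}\|X\|,\mathbb{E}\|Y\|<\infty$ since $|\langle x-y,\theta\rangle|\le\|x\|+\|y\|$ gives a uniform integrable envelope over $\theta\in\mathbb{S}^{D-1}$.
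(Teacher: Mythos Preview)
Your proposal is correct and follows essentially the same route as the paper: both represent the Euclidean norm as a spherical average of absolute projections, reduce $\mathcal{E}$ to an integral of one-dimensional energy distances over $\mathbb{S}^{D-1}$, invoke the known $1$D identity $\mathcal{E}_{1}=2\,\mathrm{CRPS}$, and compute the constant $\int_{\mathbb{S}^{D-1}}|\theta_{1}|\,d\theta=\tfrac{2S_{D-2}}{D-1}$. The only cosmetic difference is that the paper evaluates this constant by integrating $|\theta_{1}|$ over the unit ball two ways (axial slabs versus spherical shells) rather than via spherical coordinates, and your version is slightly more explicit about the Fubini justification.
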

\begin{proof}
For $x\in\mathcal{X}$ consider the value $s(x)=\int_{\theta\in \mathbb{S}^{D-1}}|\langle x,\theta\rangle| d\theta$. Note that $s(x)$ depends only on $\|x\|$, i.e. 
$$s(x)=\int_{\theta\in \mathbb{S}^{D-1}}|\langle x,\theta\rangle| d\theta=\|x\|\cdot \int_{\theta\in \mathbb{S}^{D-1}}|\theta_{1}| d\theta=\|x\|\cdot s(1),$$ 
where $\theta_{1}$ is the first coordinate of $\theta=(\theta_{1},\dots,\theta_{D})\in\mathbb{S}^{D-1}\subset\mathbb{R}^{D}$. Thus,
\begin{eqnarray}\mathbb{E}_{\gamma\times\gamma}\|x-x'\|=\mathbb{E}_{\gamma\times\gamma}\frac{s(x-x')}{s(1)}=\nonumber
\\
\frac{1}{s(1)}\mathbb{E}_{\gamma\times\gamma}\big[\int_{\theta\in \mathbb{S}^{D-1}}|\langle x-x',\theta\rangle| d\theta\big]=
\nonumber
\\
\frac{1}{s(1)}\int_{\theta\in \mathbb{S}^{D-1}}\big[\mathbb{E}_{\gamma\times\gamma}|\langle x-x',\theta\rangle|\big] d\theta=
\nonumber
\\
\frac{1}{s(1)}\int_{\theta\in \mathbb{S}^{D-1}}\big[\mathbb{E}_{\gamma\times\gamma}|\langle x,\theta\rangle-\langle x',\theta\rangle|\big] d\theta
\label{abs-to-slice}
\end{eqnarray}
We derive analogs of \eqref{abs-to-slice} for other terms of \eqref{energy-based-distance} and obtain
\begin{eqnarray}\mathcal{E}(\gamma,\omega)=\frac{1}{s(1)}\int_{\theta\in \mathbb{S}^{D-1}}\bigg[2\mathbb{E}_{\gamma\times\omega}|\langle x,\theta\rangle-\langle y,\theta\rangle|
\nonumber
\\
-\mathbb{E}_{\gamma\times\gamma}|\langle x,\theta\rangle-\langle x',\theta\rangle|-\mathbb{E}_{\omega\times\omega}|\langle y,\theta\rangle-\langle y',\theta\rangle|\bigg] d\theta
\label{energy-to-scrps-int}
\end{eqnarray}
The expression within the large square brackets of \eqref{energy-to-scrps-int} equals Energy-based score between 1-dimensional projections of $\gamma$ and $\omega$ onto the direction $\theta$. According to \cite{rizzo2016energy},
% szekely2003statistics,
it equals to CRPS multiplied by $2$, i.e. \eqref{energy-to-scrps-int} turns into
\begin{eqnarray}\mathcal{E}(\gamma,\omega)=\frac{1}{s(1)}\int_{\theta\in \mathbb{S}^{D-1}}\bigg[2\int_{-\infty}^{\infty}|\text{SCDF}_{\gamma}(\theta,t)-\text{SCDF}_{\omega}(\theta,t)|^{2}dt\bigg] d\theta=
\nonumber
\\
\frac{2S_{D-1}}{s(1)}\cdot \text{SCRPS}(\gamma,\omega)
\nonumber
\end{eqnarray}
Now we compute $s(1)$. Let $\mathbb{B}^{D}=\text{Ball}_{\mathbb{R}^{D}}(0,1)$ be the $D$-dimensional unit ball (whose boundary is $\mathbb{S}^{D-1}$). Let $V^{D}$ be the volume of $\mathbb{B}^{D}$. We note that
\begin{eqnarray}
\int_{\mathbb{B}^{D}}|\theta_{1}|d\theta=\int_{-1}^{1}|\theta_{1}|\cdot \big[(\sqrt{1-\theta_{1}^{2}})^{D-1}\cdot V^{D-1}\big]d\theta_{1}=
\nonumber
\\
2\int_{0}^{1}\theta_{1}\cdot \big[(\sqrt{1-\theta_{1}^{2}})^{D-1}\cdot V^{D-1}\big]d\theta_{1}=\frac{2V^{D-1}}{D+1},
\label{int-using-slices}
\end{eqnarray}
where we to compute the integral we decompose it into ball sliced orthogonal to the first axis.
Now we compute the integral again, but by decomposing it into the integrals over spheres:
\begin{eqnarray}
\int_{\mathbb{B}^{D}}|\theta_{1}|d\theta=\int_{0}^{1}\big[\int_{r\cdot\mathbb{S}^{D-1}}|\theta_{1}|d\theta\big]dr=\int_{0}^{1}r^{D}\underbrace{\big[\int_{\mathbb{S}^{D-1}}|\theta_{1}|d\theta\big]}_{s(1)}dr=\frac{s(1)}{D+1}.
\label{int-using-spheres}
\end{eqnarray}
Finally, by matching \eqref{int-using-slices} with \eqref{int-using-spheres} and using equality $V^{D-1}=\frac{S^{D-2}}{D-1}$ we conclude that $s(1)=2V^{D-1}=\frac{2S^{D-2}}{D-1}$ and finish the proof.
\end{proof}

From Theorem \ref{thm-energy-scrps} we immediately conclude that Energy-Based loss is mixable (exp-concave). If $\mathcal{X}\subset\text{Ball}_{\mathbb{R}^{D}}(0,R)$, then the learning rate for mixability (exp-concavity) is $(D-1)\frac{S_{D-1}}{S_{D-2}}$ times lower than the analogous rate for SCRPS. The aggregated prediction is computed exactly the same way as for SCRPS, see discussion in previous Subsection \ref{sec-scrps}.

\subsection{Density-Based Losses}
\label{sec-density-based}
Let $(\mathcal{X},\sigma,\mu)$ be a probability space with $\sigma$-finite measure $\mu$. Denote the set of all absolutely continuous w.r.t. $\mu$ probability measures on $(\mathcal{X},\sigma)$ by $\Omega=\Gamma$. For every $\upsilon\in\Gamma$ we denote its density by $p_{\upsilon}(x)=\frac{d\upsilon(x)}{d\mu(x)}$.

\subsubsection{Kullback-Leibler Divergence}
\label{sec-rkl}
Consider \textbf{Kullback-Leibler Divergence} between the outcome and the predicted distribution:
\begin{eqnarray}\text{KL}(\omega||\gamma)=-\int_{\mathcal{X}}\log\bigg[\frac{p_{\gamma}(x)}{p_{\omega}(x)}\bigg]p_{\omega}(x)d\mu(x)=
\nonumber
\\
\underbrace{-\int_{\mathcal{X}}\log [p_{\gamma}(x)] \cdot p_{\omega}(x)d\mu(x)}_{-\int_{\mathcal{X}}\log [p_{\gamma}(x)] d\omega(x)}-H_{\mu}(p_{\omega}).
\label{kl-divergence}
\end{eqnarray}
% \footnote{For correctness of the definition one may assume that $p_{\omega}(x)=0\Rightarrow p_{\gamma}(x)=0$ and define $0\cdot \log \frac{0}{0}\stackrel{\text{def}}{=}0$.}
% We assume that .
KL is a key tool in Bayesian machine learning. It is probably the most known representative of the class of $f$-divergences \cite{nowozin2016f}.

By skipping the $\gamma$-independent (prediction) entropy term $-H_{\mu}(p_{\omega})$ one may clearly see that the resulting loss is integral loss with density $u_{\omega}(x)=p_{\omega}(x)$ and $\mu_{\omega}\equiv \mu$ for the logarithmic loss function. The logarithmic function is $1$-mixable (exp-concave), thus, from Theorem \ref{theorem-main} we conclude that KL divergence is also $1$-mixable (exp-concave).

For $1$-mixable (exp-concave) KL-divergence the aggregated prediction  $\overline{\gamma}$ is given by its density
% with the substitution function given by
$$\overline{p}(x)=\sum_{n=1}^{N}{\alpha}^{n}p_{\gamma^{n}}(x)$$
for experts' predictions $\gamma^{1},\dots,\gamma^{N}\in\Gamma$ and weights ${({\alpha}^{1},\dots,{\alpha}^{N})\in\Delta_{N}}$. The resulting $\overline{p}$ is the density of a mixture of probability distributions $\gamma^{n}$ w.r.t. weights ${\alpha}^{n}$.
% The resulting aggregated function is a density of the mixture of $\gamma^{1},\dots,\gamma^{N}$ w.r.t. weights ${\alpha}^{1},\dots,{\alpha}^{N}$.

% If we omit the entropy term in \eqref{kl-divergence} the resulting formula become exactly equal the ($1$-mixable/exp-concave) integral log-loss. In this case $\omega$ is not necessary required to be absolutely continuous w.r.t $\mu$. From the practical point of view this means that while the prediction has to be absolutely continuous w.r.t some $\mu$ (Lesbegue measure over $\mathcal{X}\subset \mathbb{R}^{D}$ or uniform over finite $\mathcal{X}$), the outcome may be an empirical distribution (i.e. distribution over the finite set) and the loss will still be mixable.

KL divergence (and the related log-loss) is known to have \textbf{mode seeking property}, i.e. the divergence between $\text{KL}(p_{\omega}||p_{\gamma})$ is small when $p_{\gamma}$ attains huge values in areas where $p_{\omega}$ has huge values. Such behaviour suggests that in some problems KL may not be a good measure of dissimilarity of distributions \cite{regli2018alpha}. Although some other representatives of $f$-divergence class are known to be robust, e.g. Reverse KL-divergence, we do not know whether they are mixable.

\subsubsection{Beta-2 Divergence}
\label{sec-beta-2}
The well-known representative of \textbf{$\beta$-divergences} class \cite{cichocki2010families} used to compare probability distributions is \textbf{Beta-2 divergence}:
$$\mathcal{B}_{2}(p_{\gamma}, p_{\omega})=\int_{\mathcal{X}}|p_{\gamma}(x)-p_{\omega}(x)|^{2}d\mu(x).$$
While KL divergence is known to have \textbf{mode seeking property}, Beta-2 is more \textbf{robust}, see \cite{regli2018alpha}. This property is useful when comparing distributions with outliers.

When $|\mathcal{X}|<\infty$ and $\mu(x)\equiv 1$ for all $x\in\mathcal{X}$, values $p_{\gamma}(x),p_{\omega}(x)$ become probabilities. In this case within the framework of online learning the loss in known as \textbf{Brier score}. In practice it is widely used (similar to CRPS) to assess the quality of weather forecasts \cite{zamo2018estimation}.
%murphy1973new,ferro2007comparing,
Brier score between the distributions on the finite sets is mixable, see e.g. \cite{vovk2009prediction}.
%,zhdanov2012universal

We consider a more general case. Assume that $\|\mu\|_{1}<\infty$ and $\Omega=\Gamma$ is the set of all probability measures $\upsilon$ on $(X,\sigma)$ with $p_{\upsilon}(x)\in [0, M]$ for all $x\in \mathcal{X}$. In this case, the loss function $|p_{\gamma}(x)-p_{\omega}(x)|^{2}$ is $\frac{2}{M^{2}}$-mixable ($\frac{1}{2M^{2}}$-exp-concave) {since it is a squared loss function of inputs $p_{\gamma}(x), p_{\omega}(x)\in[0,M]$}. The divergence $\mathcal{B}_{2}$ is a multiple of an integral squared loss

$$\mathcal{B}_{2}(p_{\gamma}, p_{\omega})=\|\mu\|_{1}\underbrace{\int_{\mathcal{X}}|p_{\gamma}(x)-p_{\omega}(x)|^{2}\frac{1}{\|\mu\|_{1}}d\mu(x)}_{\text{Integral squared loss.}}.$$
We conclude that $\mathcal{B}_{2}$ is $\frac{2}{\|\mu\|_{1}M^{2}}$-mixable ($\frac{1}{2\|\mu\|_{1}M^{2}}$-exp-concave).

For $\frac{1}{2\|\mu\|_{1}M^{2}}$-exp-concave Beta-2 divergence the aggregated prediction $\overline{\gamma}$ is given by its density:
\begin{eqnarray}
\overline{p}(x)=\sum_{n=1}^{N}{\alpha}^{n}\cdot p_{\gamma^{n}}(x)
\label{beta-2-exp-substitution}
\end{eqnarray}
for weights ${({\alpha}^{1},\dots,{\alpha}^{N})\in\Delta_{N}}$ and experts' predictions $\gamma^{1},\dots,\gamma^{N}\in\Gamma$.
The resulting $\overline{p}$ is the density of a mixture of probability distributions $\gamma^{n}$ w.r.t. weights ${\alpha}^{n}$.

For $\frac{2}{\|\mu\|_{1}M^{2}}$-mixable Beta-2 divergence the density of aggregated prediction $\overline{\gamma}$ should satisty:
\begin{equation}
\overline{p}(x)=\frac{M}{2}+\frac{M}{4}\log \frac{\sum_{n=1}^{N}{\alpha}^{n}\exp[-2\big(\frac{M-p_{\gamma^{n}}(x)}{M}\big)^{2}]}{\sum_{n=1}^{N}{\alpha}^{n}\exp[-2\big(\frac{p_{\gamma^{n}}(x)}{M}\big)^{2}]}.
\label{beta-2-pre-substitution}
\end{equation}
Similar to SCRPS in Subsection \ref{sec-scrps}, the result of \eqref{beta-2-pre-substitution} may not be a density function w.r.t. $\mu$. This issue can be solved by the projection trick, i.e. projecting $p_{\overline{\gamma}}$ onto the convex subset of $\mathcal{L}^{2}(\mathcal{X},\mu)$ of $\mathcal{L}^{1}$-integrable non-negative functions which represent densities of distributions w.r.t. $\mu$. The resulting projection will by definition be the density function of some distribution $\overline{\gamma}$. Whereas for finite $\mathcal{X}$ formulas for the projection are tractable and given in e.g. \cite{vovk2009prediction}, in the case of continuous $\mathcal{X}$ analytic formulas for the projection become intractable for arbitrary distributions and finite approximations should be used. Similar to SCRPS, formula \eqref{beta-2-exp-substitution} for the aggregation of predictions is more preferable from the computational point of view.

\subsection{Characteristic Function Discrepancy}
\label{sec-cfd}
Let $\mathcal{X}=\mathbb{R}^{D}$ and assume that $\Gamma=\Omega$ is the set of all probability measures over its Borel field. For a fixed $\sigma$-finite measure $\mu$ on $\mathcal{X}$ and measurable non-negative function $u:\mathcal{X}\rightarrow \mathbb{R}_{+}$ satisfying $\int_{\mathbb{R}^{D}}u(t)d\mu(t)=1$, we consider the \textbf{Characteristic Function Discrepancy} (CFD):
\begin{equation}\text{CFD}_{u,\mu}(\gamma, \omega)=\int_{\mathbb{R}^{D}}\|\phi_{\gamma}(t)-\phi_{\omega}(t)\|^{2}_{\mathbb{C}}u(t)d\mu(t),
\label{cfd}
\end{equation}
where $\phi_{\upsilon}(t)=\mathbb{E}_{x\sim\upsilon}e^{i\langle x,t\rangle}$ denotes the characteristic function of a distribution $\upsilon\in\Gamma$. 
% This measure of dissimilarity is a useful tool for two-sample testing \cite{epps1986omnibus}.
%,gretton2012kernel,chwialkowski2015fast
% In  particular, in the tests for normality of random variables it is known as BHEP statistic \cite{tenreiro2009choice}.
%henze1997new,
CFD is highly related to \textbf{Maximum Mean Discrepancy} which we discuss in the next Subsection \ref{sec-mmd}, yet in practice it attains faster stochastic computation that was noted by \cite{fatir2019characteristic}. By varying $u$ and $\mu$ it is possible to assign different importances to frequencies of compared probability distributions.

For all $ t\in \mathbb{R}^{D}$ we have $\phi_{\omega}(t),\phi_{\gamma}(t)\in \mbox{Ball}_{\mathbb{C}}(0, 1)$. {In \ref{sec-complex-mix}, we prove that the function ${\lambda: \text{Ball}_{\mathbb{C}}(0, 1)\times \text{Ball}_{\mathbb{C}}(0, 1)\rightarrow \mathbb{R}_{+}}$ defined by 
$$\lambda(z,z')=(z-z')\cdot \overline{(z-z')}=\|z-z'\|^{2}_{\mathbb{C}}.$$
is $\frac{1}{4}$-mixable and $\frac{1}{8}$-exp-concave. Unlike most previous cases, the learning rates for mixability and exp-concavity differ only 2 times but not 4 times. This is due to functions having different domains: $[\mbox{Ball}_{\mathbb{C}}(0, 1)]^{2}\subset \mathbb{C}^{2}$ instead of $[a,b]^{2}\subset\mathbb{R}^{2}$, see \ref{sec-complex-mix} for details. Since CFD is a $\lambda$-loss function, by our Theorem \ref{theorem-main} we conclude that it is also $\frac{1}{4}$-mixable and $\frac{1}{8}$-exp-concave.}

% This function can be viewed as quadratic function in $\mathbb{R}^{2}\equiv \mathbb{C}$ with inputs $\in \text{Ball}_{\mathbb{R}^{2}}(0, 1)$. Thus, this function is exponentially concave with $\eta=\frac{1}{8}$ with linear (weighted average) substitution function (see Section \ref{sec-exp-concave}). Hence we conclude that BHEP loss is also $\frac{1}{8}$-mixable with the substitution function given by

For $\frac{1}{8}$-exp-concave CFD the aggregated prediction $\overline{\gamma}$ is given by its CF
\begin{eqnarray}
\overline{\phi}(x)=\sum_{n=1}^{N}{\alpha}^{n}\cdot \phi_{\gamma^{n}}(x)
\label{cf-exp-substitution}
\end{eqnarray}
for weights ${({\alpha}^{1},\dots,{\alpha}^{N})\in\Delta_{N}}$ and experts' predictions $\gamma^{1},\dots,\gamma^{N}\in\Gamma$. The resulting $\overline{\phi}$ is the CF of a mixture of probability distributions $\gamma^{n}$ w.r.t. weights ${\alpha}^{n}$.

Although we pointed that CFD is $\frac{1}{4}$-mixable, we note that the aggregated result obtained by point-wise function (provided in \ref{sec-complex-mix}) will not be necessarily a CF of some distribution $\overline{\gamma}$. Thus, we have the situation similar to SCRPS and Beta-2 divergence, see Subsections \ref{sec-scrps}, \ref{sec-beta-2}.

\subsection{Maximum Mean Discrepancy}
\label{sec-mmd}
Let $\mathcal{X}=\mathbb{R}^{D}$ and assume that $\Gamma=\Omega$ is the set of all probability measures over its Borel field. Let $k:\mathcal{X}\times\mathcal{X}\rightarrow \mathbb{R}$ be a symmetric positive definite kernel, i.e. for all $N=1,2,\dots$, points $x_{1},\dots,x_{N}\in\mathcal{X}$ and $a_{1},\dots,a_{N}\in \mathbb{R}$ it satisfies
\begin{equation}
    \sum_{n,n'=1}^{N}a_{n}a_{n'} k(x_{n},x_{n'})\geq 0.
    \label{psd-kernel}
\end{equation}
Consider (the square of) \textbf{Maximum Mean Discrepancy} (MMD):
\begin{eqnarray}\text{MMD}_{k}^{2}(\gamma,\omega)=\mathbb{E}_{\gamma\times\gamma}\big[k(x,x')\big]-2\cdot \mathbb{E}_{\gamma\times\omega}\big[k(x,y)\big]+\mathbb{E}_{\omega\times\omega}\big[k(y,y')\big].
\label{mmd-def}
\end{eqnarray}
It is known that \eqref{mmd-def} is non-negative and its root satisfies the triangle inequality. For characteristic kernels $k$ it also holds true that $\text{MMD}_{k}(\gamma,\omega)=0\iff \gamma=\omega$. Thus, for such kernels $\text{MMD}_{k}$ turns to be a metric on the space of probability distributions, see \cite{sriperumbudur2010hilbert} for detailed explanations.

Maximum mean discrepancy is used to compare probability distributions in two-sample hypothesis testing, see \cite{gretton2007kernel}.
%,gretton2012kernel,gretton2012optimal
Also, useful properties of $\text{MMD}_{k}$, e.g. it naturally admits unbiased estimates from empirical samples, made it widely applicable to generative machine learning, see \cite{li2017mmd}.
%,arbel2018gradient

In this paper, we consider only symmetric positive definite kernels of the form $k(x,y)=\psi(x-y)$, where $\psi$ is a bounded and continuous function.  Such kernels are usually called \textbf{translation-invariant}.
We note that the majority of kernels used in practice are actually translation-invariant (Gaussian, Laplacian, Sine, etc.), see e.g. Table 2 in \cite{sriperumbudur2010hilbert}.

In the case of translation-invariance, the positive definiteness of kernel $k$ \eqref{psd-kernel} turns to the a positive definiteness of a function $\psi$, i.e. for all $N=1,2,\dots$, points $x_{1},\dots,x_{N}\in\mathcal{X}$ and $a_{1},\dots,a_{N}\in \mathbb{R}$ it holds true that
\begin{equation}
    \sum_{n,n'=1}^{N}a_{n}a_{n'} \psi(x_{n}-x_{n'})\geq 0.
    \label{psd-function}
\end{equation}
According to well-celebrated \textbf{Bochner's Theorem}, see \cite[Theorem 3]{sriperumbudur2010hilbert}, function $\psi$ is positive definite if and only if it is the
Fourier transform of a finite nonnegative Borel measure $\mu$ on $\mathbb{R}^{D}$, that is,
$$\psi(x)=\int_{\mathbb{R}^{D}}e^{-i\langle x,t\rangle}d\mu(t).$$
By using this correspondence between kernel $k$, function $\psi$ and measure $\mu$, \cite[Corollary 4]{sriperumbudur2010hilbert} prove that 
\begin{eqnarray}
\text{MMD}_{k}^{2}(\gamma,\omega)=\int_{\mathbb{R}^{D}}\|\phi_{\gamma}(t)-\phi_{\omega}(t)\|^{2}_{\mathbb{C}}d\mu(t),
\label{mmd-to-cfd}
\end{eqnarray}
where $\phi_{\gamma},\phi_{\omega}$ are the characteristic functions of $\gamma$ and $\omega$ respectively. From \eqref{mmd-to-cfd} we see that $\text{MMD}_{k}$ turns to be a multiple of Characteristic function discrepancy \eqref{cfd}. Indeed, by introducing $u(t)\equiv \frac{1}{\|\mu\|_{1}}$ we obtain
\begin{eqnarray}
\text{MMD}_{k}^{2}(\gamma,\omega)=\|\mu\|_{1}\int_{\mathbb{R}^{D}}\|\phi_{\gamma}(t)-\phi_{\omega}(t)\|^{2}_{\mathbb{C}}u(t)d\mu(t)=
\nonumber
\\
\|\mu\|_{1}\cdot \text{CFD}_{\frac{1}{\|\mu\|_{1}},\mu}(\gamma,\omega),
\label{mmd-to-cfd-final}
\end{eqnarray}
Equation \eqref{mmd-to-cfd-final} immediately means that $\text{MMD}_{k}$ is $\frac{1}{8\|\mu\|_{1}}$-exponentially concave. Analogously to \eqref{cf-exp-substitution}, the aggregated predictive distribution is a mixture of input distributions. We also conclude that $\text{MMD}_{k}$ is $\frac{1}{4\|\mu\|_{1}}$-mixable, although the aggregated prediction is infeasible in general (analogously to CFD, see the discussion of Subsection \ref{sec-cfd}).

\subsection{Optimal Transport Costs}
\label{sec-ot-costs}

\subsubsection{1-Dimensional Optimal Transport}
\label{sec-ot-1d}
Let $\mathcal{X}=\mathbb{R}^{1}$ and assume that $\Gamma=\Omega$ is the space of all probability measures over its Borel field. For a cost function $c:\mathcal{X}\times\mathcal{X}\rightarrow\mathbb{R}$ consider the optimal transport cost \cite{kantorovitch1958translocation} between distributions ${\gamma,\omega\in\Gamma}$:
\begin{equation}
C(\gamma,\omega)=\min_{\mu\in\Pi(\gamma,\omega)}\int_{\Gamma\times\Omega}c(x,x')d\mu(x, x'),    
\label{ot-cost}
\end{equation}
where $\Pi(\gamma,\omega)$ is the set of all probability distributions (transport plans) on $\mathcal{X}\times\mathcal{X}$ whose left and right marginals are distributions $\gamma$ and $\omega$ respectively.

In contrast to the losses considered in Subsection \ref{sec-density-based}, optimal transport cost is defined for arbitrary distributions which may not have densities. Thus, similar to CRPS and CFD, it can be used to compare predicted distribution with discrete outcomes. Besides, optimal transport costs are widely used in many machine learning and image-processing problems, see \cite{peyre2019computational}. In particular, they are applied to generative modeling \cite{arjovsky2017wasserstein,korotin2021wasserstein}.

% The related definition of optimal transport cost is given by.
% $$C(\gamma,\omega)=\min_{T:T\circ \gamma=\omega}\int_{\Gamma}c\big(x, T(x)\big)d\gamma(x),$$
% where minimum is taken in the class of all mappings $T:\Gamma\rightarrow \Omega$, such that $\gamma$ is the pushforward distribution of $\omega$ by $T$.
% Although this definition is more intuitive, it is not always applicable, because the optimal transport plan may not exist in a form of a function.

If cost function is the $p$-th degree of the $p$-th Euclidean norm, i.e. $c(x,x')=\|x-x'\|_{p}^{p}$, the resulting distance $C(\omega,\gamma)$ is called \textbf{Wasserstein-$p$ distance} and denoted by $\mathbb{W}_{p}^{p}$.

In this section, we show that under specific conditions the optimal transport cost is mixable (exp-concave). To begin with, we recall that for $\mathcal{X}=\mathbb{R}$ the optimal transport has a superior property which brings linearity structure to the space of $1$-dimensional probability distributions.
\begin{lemma}[Explicit 1D Optimal Transport]
\label{lemma-ot-1d}
If the transport cost $c:\mathcal{X}\times\mathcal{X}\rightarrow\mathbb{R}$ is twice differentiable and ${\frac{\partial^{2}c(x, x')}{\partial x\partial x'}<0}$, then the optimal transport cost between $\gamma,\omega$ is given by 
\begin{equation}C(\gamma,\omega)=\int_{0}^{1}c\big(\text{\normalfont Q}_{\gamma}(t),\text{\normalfont Q}_{\omega}(t)\big)dt,
\end{equation}
where $\text{\normalfont Q}_{\upsilon}:[0,1]\rightarrow\mathbb{R}$ is the quantile function of $\upsilon\in\Gamma$ defined by 
$${\text{\normalfont Q}_{\upsilon}(t)=\inf\{x\in\mathbb{R}:t\leq \text{\normalfont CDF}_{\upsilon}(x)\}}.$$
\label{ot-quantile-cost}
\end{lemma}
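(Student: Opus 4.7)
The plan is to exhibit the comonotone (quantile-quantile) coupling as the optimal plan and prove its optimality via the strict submodularity of $c$ that follows from the sign condition on the mixed partial derivative.

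First, I would construct the candidate optimizer: define $T:[0,1]\to\mathcal{X}\times\mathcal{X}$ by $T(t)=(Q_\gamma(t),Q_\omega(t))$ and set $\mu^{*}:=T_{\#}\mathrm{Leb}_{[0,1]}$. Using the standard identity $(Q_\upsilon)_{\#}\mathrm{Leb}_{[0,1]}=\upsilon$, the marginals of $\mu^{*}$ are $\gamma$ and $\omega$, so $\mu^{*}\in\Pi(\gamma,\omega)$. By the change-of-variables formula, its transport cost equals $\int_{0}^{1}c(Q_\gamma(t),Q_\omega(t))\,dt$, matching the claimed right-hand side. It remains to show that no other plan in $\Pi(\gamma,\omega)$ achieves a strictly smaller cost.

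Next, I would extract from the sign condition the strict Monge (submodularity) inequality: for any $x_{1}<x_{2}$ and $x'_{1}<x'_{2}$,
\begin{equation*}
c(x_{1},x'_{1})+c(x_{2},x'_{2})\;<\;c(x_{1},x'_{2})+c(x_{2},x'_{1}),
\end{equation*}
obtained by integrating $\partial^{2}c/\partial x\,\partial x'<0$ over the rectangle $[x_{1},x_{2}]\times[x'_{1},x'_{2}]$. With this inequality in hand, any two-point ``crossing'' configuration in the support of a transport plan can be replaced by a non-crossing one with strictly smaller cost. I would then invoke the standard characterization that an optimal transport plan must be $c$-cyclically monotone, and observe that in one dimension the Monge inequality forces the support of any such plan to be a non-decreasing subset of $\mathcal{X}\times\mathcal{X}$. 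Since $\mu^{*}$ is the unique element of $\Pi(\gamma,\omega)$ supported on the graph of the monotone rearrangement between quantiles, we conclude that $\mu^{*}$ is optimal and the infimum in \eqref{ot-cost} is in fact a minimum.

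The main obstacle is making the ``no-crossings'' swap argument rigorous beyond the discrete setting. For atomic measures the rearrangement is a clean finite sum; for arbitrary Borel probability measures one must either formalize the swap via measure disintegrations along the product structure, or construct explicit Kantorovich dual potentials $\phi,\psi$ satisfying $\phi(x)+\psi(x')\le c(x,x')$ with equality on $\mathrm{supp}(\mu^{*})$ and conclude by the duality inequality $\int c\,d\mu\ge\int\phi\,d\gamma+\int\psi\,d\omega=\int c\,d\mu^{*}$. An equivalent route is to appeal directly to the classical Hoeffding--Fr\'echet--Dall'Aglio theorem on optimal couplings for submodular costs on $\mathbb{R}$, for which the mixed-partial sign condition is precisely the hypothesis ensuring strict optimality of the comonotone coupling.
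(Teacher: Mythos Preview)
Your proposal is essentially correct and follows the standard route to this classical result: exhibit the comonotone coupling, derive strict submodularity from the sign of the mixed partial, and conclude optimality either via $c$-cyclical monotonicity, via dual potentials, or by direct appeal to the Hoeffding--Fr\'echet--Dall'Aglio theorem. The outline is sound, and you are right to flag the passage from the discrete swap argument to general Borel measures as the place where rigor must be supplied.

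However, there is nothing to compare against: the paper does \emph{not} prove this lemma. Immediately after the statement it simply writes ``The result was initially proved by \cite{lorentz1953inequality} and then rediscovered several times, see \cite{becker1973theory,mirrlees1971exploration,spence1978job}'' and moves on to use the lemma as a black box. So your sketch goes well beyond what the paper offers; the paper treats the statement as classical and outsources the argument entirely to the cited references.

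One small caution on your sketch: the sentence ``$\mu^{*}$ is the unique element of $\Pi(\gamma,\omega)$ supported on the graph of the monotone rearrangement between quantiles'' overstates slightly. When one of the marginals has atoms there is no monotone \emph{map} between them, and several couplings with monotone support can coexist; what you actually need (and what strict submodularity gives) is that every optimal plan is supported on a monotone set and that the comonotone coupling is among them, which suffices to identify the optimal \emph{cost} with $\int_{0}^{1}c(Q_{\gamma}(t),Q_{\omega}(t))\,dt$. The cleanest way to close the argument is your last suggestion: invoke the Hoeffding--Fr\'echet bound directly, since the mixed-partial hypothesis is exactly its submodularity assumption.
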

The result was initially proved by \cite{lorentz1953inequality} and then rediscovered several times, see \cite{becker1973theory,mirrlees1971exploration,spence1978job}. The lemma makes it possible to consider the probability distribution's quantile function (inverse cumulative distribution function) instead of the probability distribution for computation of 1-dimensional optimal transport cost. Unlike CRPS (see Figure \ref{fig:crps}), optimal transport compares CDFs not vertically but horizontally, see the illustration in Figure \ref{fig:ot}.

\begin{figure}[!h]
     \centering
     \begin{subfigure}[b]{0.48\columnwidth}
         \centering
         \includegraphics[width=\linewidth]{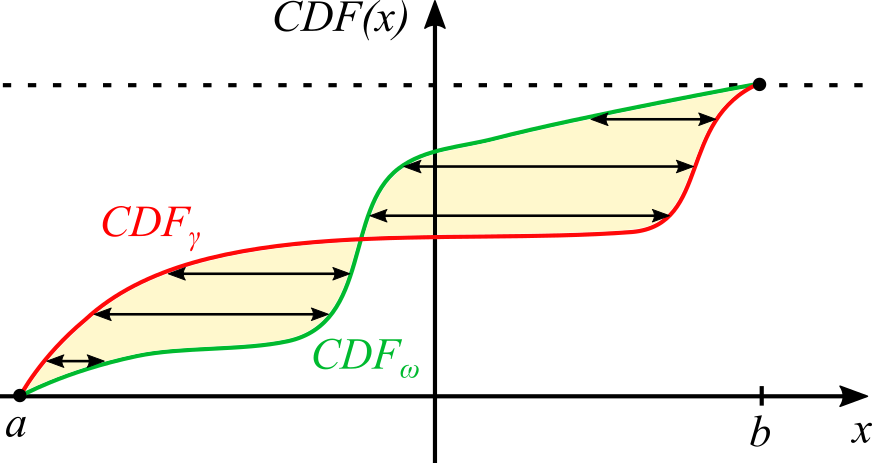}
        \caption{Arbitrary distribution $\omega$.}
     \end{subfigure}
    \begin{subfigure}[b]{0.48\columnwidth}
        \centering
    \includegraphics[width=\linewidth]{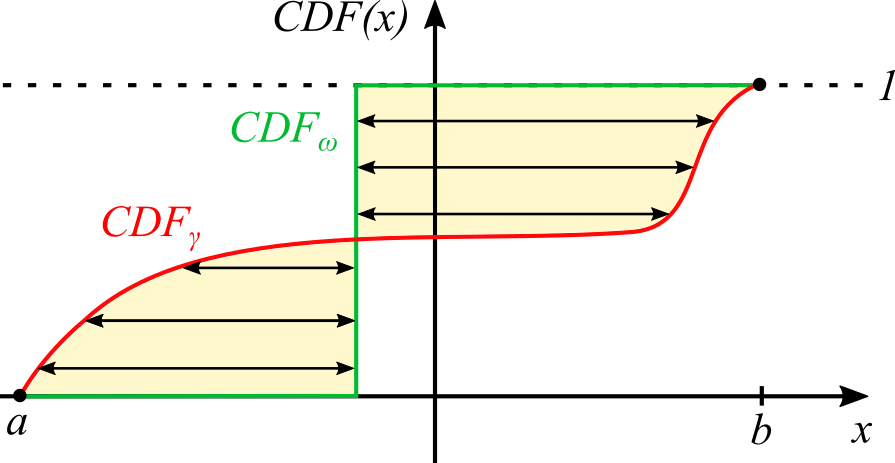}
    \caption{Empirical (Dirac) distribution $\omega$.}
     \end{subfigure}
    \caption{Visualization of the comparison of CDFs of distributions $\gamma,\omega$ on $[a,b]$ by using Optimal Transport Cost.}
    \label{fig:ot}
\end{figure}

From Lemma \ref{lemma-ot-1d} we conclude that if transport cost $c:\mathcal{X}\times \mathcal{X}\rightarrow\mathbb{R}_{+}$ is $\eta$-mixable, then by Theorem \ref{theorem-main} the corresponding $C(\gamma,\omega)$ is $\eta$-mixable (exp-concave) $\mathcal{X}$-integral $c$-loss function.

For $\eta$-mixable transport cost the aggregated prediction's quantile function should satisfy:
% $\overline{\gamma}$ is given by its quantile function:

$$\overline{\text{Q}}(t)=\Sigma_{c}\big(\{\text{Q}_{\gamma^{n}}(t),{\alpha}^{n}\}_{n=1}^{N}\big).$$
for experts' predictions $\gamma^{1},\dots,\gamma^{N}\in\Gamma$ and weights ${({\alpha}^{1},\dots,{\alpha}^{N})\in\Delta_{N}}$. In general, $\overline{\text{Q}}(t)$ may not be a quantile function of some distribution. It is necessarily a quantile function if the substitution $\Sigma_{c}$ for transport cost $c$ is monotone. However, even if $\Sigma_{c}$ is not monotone, $\overline{\text{Q}}(t)$ can be used to implicitly model the aggregated prediction. To show it, we state and prove

\begin{lemma}[Implicit Aggregation for 1D Optimal Transport]
Let $\zeta$ be the uniform probability measure on $[0,1]$. Consider the pushforward probability measure $\overline{\gamma}=\overline{\text{Q}}\circ \zeta$, i.e. $\overline{\gamma}$ is distributed according to $\overline{\text{Q}}(t)$ for $t\sim \zeta$. Then for all $\omega\in\Omega$ it satisfies:
\begin{equation}\exp\big(-\eta C(\overline{\gamma}, \omega)\big)\geq \sum_{n=1}^{N}{\alpha}^{n}\exp\big[-\eta C(\gamma^{n},\omega)\big].
\label{implicit-lemma-expr}
\end{equation}
\label{lemma-implicit}
\end{lemma}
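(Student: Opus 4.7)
The plan is to exploit the fact that, even though $\overline{\text{Q}}(t)$ need not be monotone and therefore need not be a quantile function of $\overline{\gamma}$, the joint map $t \mapsto (\overline{\text{Q}}(t), \text{Q}_{\omega}(t))$ still pushes $\zeta$ forward to a coupling of $\overline{\gamma}$ and $\omega$. Hence the infimum definition \eqref{ot-cost} of $C$ gives an upper bound on $C(\overline{\gamma},\omega)$, and this bound is expressed as an integral to which Theorem \ref{theorem-main} applies directly.

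First I would observe that by construction $\overline{\gamma} = \overline{\text{Q}}_{*}\zeta$ and, as a classical fact about quantile functions, $\omega = (\text{Q}_{\omega})_{*}\zeta$. Therefore the pushforward $\pi := (\overline{\text{Q}}, \text{Q}_{\omega})_{*}\zeta$ is a probability measure on $\mathcal{X}\times\mathcal{X}$ whose marginals are exactly $\overline{\gamma}$ and $\omega$, i.e.\ $\pi \in \Pi(\overline{\gamma},\omega)$. Taking this particular (possibly non-optimal) transport plan in \eqref{ot-cost} yields
\begin{equation}
C(\overline{\gamma},\omega) \;\leq\; \int_{\mathcal{X}\times\mathcal{X}} c(x,x')\,d\pi(x,x') \;=\; \int_{0}^{1} c\bigl(\overline{\text{Q}}(t), \text{Q}_{\omega}(t)\bigr)\,dt,
\label{plan-ineq1}
\end{equation}
so that $\exp(-\eta C(\overline{\gamma},\omega)) \geq \exp\bigl(-\eta \int_{0}^{1} c(\overline{\text{Q}}(t),\text{Q}_{\omega}(t))\,dt\bigr)$.

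Next, by the definition $\overline{\text{Q}}(t) = \Sigma_{c}(\{\text{Q}_{\gamma^{n}}(t), w^{n}\}_{n=1}^{N})$ and the $\eta$-mixability of $c$, we have the point-wise inequality $\exp(-\eta c(\overline{\text{Q}}(t), \text{Q}_{\omega}(t))) \geq \sum_{n=1}^{N} w^{n}\exp(-\eta c(\text{Q}_{\gamma^{n}}(t), \text{Q}_{\omega}(t)))$ for every $t\in[0,1]$. Applying Theorem \ref{theorem-main} on $(\mathcal{X}',\sigma,\mu) = ([0,1],\text{Borel},\zeta)$ with constant weight $u \equiv 1$ (equivalently, applying the Generalized Hölder Inequality of Theorem \ref{generalized-holder} as in the proof of Theorem \ref{theorem-main}) turns this point-wise statement into
\begin{equation}
\exp\!\biggl(\!-\eta\!\int_{0}^{1}\! c(\overline{\text{Q}}(t),\text{Q}_{\omega}(t))\,dt\biggr) \geq \sum_{n=1}^{N} w^{n}\exp\!\biggl(\!-\eta\!\int_{0}^{1}\! c(\text{Q}_{\gamma^{n}}(t),\text{Q}_{\omega}(t))\,dt\biggr).
\label{plan-ineq2}
\end{equation}
Finally, since each $\gamma^{n}$ is a bona fide probability distribution, Lemma \ref{lemma-ot-1d} (Explicit 1D OT) identifies the $n$-th integral on the right with $C(\gamma^{n},\omega)$. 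Chaining \eqref{plan-ineq1} and \eqref{plan-ineq2} yields the desired inequality \eqref{implicit-lemma-expr}.

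The main obstacle is conceptual rather than computational: because $\overline{\text{Q}}$ may fail to be non-decreasing, it is in general \emph{not} the quantile function of $\overline{\gamma}$, so one cannot invoke Lemma \ref{lemma-ot-1d} to write $C(\overline{\gamma},\omega)$ as $\int_{0}^{1} c(\overline{\text{Q}}(t),\text{Q}_{\omega}(t))\,dt$ as an equality. The key point is that the inequality in \eqref{plan-ineq1} nonetheless goes in the direction favourable to mixability (after applying $\exp(-\eta \cdot)$ it flips to $\geq$), so the optimality of the OT plan for $\overline{\gamma}$ is precisely what we do not need. Once this observation is in place, the remainder is a direct application of Theorem \ref{theorem-main} to the one-dimensional parameter space $[0,1]$.
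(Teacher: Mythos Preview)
Your proof is correct and follows essentially the same approach as the paper: both construct the (possibly non-optimal) coupling $(\overline{\text{Q}},\text{Q}_{\omega})_{*}\zeta$ to upper-bound $C(\overline{\gamma},\omega)$, then invoke integral mixability of $c$ on $[0,1]$. You are somewhat more explicit than the paper in separating out the appeal to Theorem~\ref{theorem-main} for inequality \eqref{plan-ineq2} and the appeal to Lemma~\ref{lemma-ot-1d} for identifying $\int_{0}^{1} c(\text{Q}_{\gamma^{n}}(t),\text{Q}_{\omega}(t))\,dt = C(\gamma^{n},\omega)$; the paper compresses these into a single line.
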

Lemma \ref{lemma-implicit} explains the mixability of the optimal transport cost $C$ with arbitrarily substitution $\Sigma_{c}$. Additionally, it provides a natural way to sample from the aggregated distribution $\overline{\gamma}$, i.e. sampling from $t\sim\text{Uniform}[0,1]$ and applying $\overline{\text{Q}}$. Now we prove the lemma.

\begin{proof}
Consider any $\omega\in\Omega$ and let $\omega\in\Pi(\overline{\gamma},\omega)$ be a transport plan (between $\overline{\gamma}$ and $\omega$) given by $\omega=[\overline{\text{Q}}, \text{Q}_{\omega}]\circ \zeta=[\overline{\text{Q}}\circ \zeta, \text{Q}_{\omega}\circ \zeta]$. Since $\omega$ is not necessarily the optimal transport plan, its cost is not smaller then the cost of the optimal one (which is given by Lemma \ref{lemma-ot-1d}):
\begin{eqnarray}\int_{0}^{1}c(\overline{\text{Q}}(t),\text{Q}_{\omega}(t))dt=\int_{\mathcal{X}\times\mathcal{X}}c(x,x')d\omega(x,x')\geq
\nonumber
\\
\int_{0}^{1}c(\text{Q}_{\overline{\gamma}}(t),\text{Q}_{\omega}(t))dt=C(\overline{\gamma},\omega),
\label{non-optimal-plan}
\end{eqnarray}
where $\text{Q}_{\overline{\gamma}}$ is the quantile function of $\overline{\gamma}$. Since $\overline{\text{Q}}$ is obtained by the substitution function $\Sigma_{c}$, we have
\begin{equation}\exp\big[-\eta \int_{0}^{1}c(\overline{\text{Q}}(t),\text{Q}_{\omega}(t))dt\big]\geq \sum_{n=1}^{N}{\alpha}^{n}\exp\big[-\eta C(\gamma^{n},\omega)\big].
\label{mixability-implicit-quantile}
\end{equation}
We combine \eqref{non-optimal-plan} with \eqref{mixability-implicit-quantile} and obtain desired \eqref{implicit-lemma-expr}.
\end{proof}

For $\eta$-exp-concave cost the aggregated prediction's quantile function is given by:
$$\overline{\text{Q}}(t)=\sum_{n=1}^{N}{\alpha}^{n}\cdot \text{Q}_{\gamma^{n}}(t).$$
The obtained function (for all admissible costs $c$!) is a quantile function of a \textbf{Wasserstein-2 barycenter} $\overline{\gamma}$ of distributions $\gamma^{1},\dots,\gamma^{N}$ w.r.t. weights ${({\alpha}^{1},\dots,{\alpha}^{N})\in\Delta_{N}}$, see \cite[Corollary 1]{bonneel2015sliced}.

\subsubsection{Sliced Wasserstein-2 Distance}
\label{sec-sliced-ot}

Definition \eqref{ot-cost} has a natural multidimensional extension to $\mathcal{X}=\mathbb{R}^{D}$. However, in this case the resulting optimal transport cost does not admit representation analogous to the one provided in Lemma \ref{lemma-ot-1d}. Even for the squared cost $c(x,x')=\|x-x'\|^{2}_{2}$ the Wasserstein-2 metric space of distributions is highly non-linear and has negative curvature.

Instead, we prove that \textbf{Sliced Wasserstein-2 distance} might be mixable under certain conditions. Following \cite{bonneel2015sliced}, we assume that $\Gamma=\Omega$ is the set of Radon (locally finite Borel) probability measures on $\mathcal{X}\subset \mathbb{R}^{D}$ with finite second moment. For all $\upsilon\in\Gamma$ we define sliced quantile function by 
$$\text{SQ}(\theta,t)=\inf\{s\in\mathbb{R}:t\leq \text{SCDF}_{\upsilon}(\theta,s)\}.$$
Next, we define sliced quadratic transport cost (the square of Sliced $\mathbb{W}_{2}$ distance):
\begin{equation}\text{S}\mathbb{W}_{2}^{2}(\gamma,\omega)=\frac{1}{S_{D-1}}\int_{\mathbb{S}^{D-1}}\bigg[\int_{0}^{1}\big(\text{SQ}_{\gamma}(\theta,t)-\text{SQ}_{\omega}(\theta,t)\big)^{2}dt\bigg]d\theta.
\label{scost-definition}
\end{equation}
From the definition we see that $\text{S}\mathbb{W}_{2}^{2}$ is the average over all the sliced quadratic transport costs. If $\gamma,\omega$ have supports $\subset \text{Ball}_{\mathbb{R}^{D}}(0,R)$, then $\text{SQ}(\theta,t)\in[-R,R]$ and point-wise squared loss is $(2R^{2})^{-1}$-mixable ($(8R^{2})^{-1}$-exp-concave). We use Theorem \ref{theorem-main} and (similar to CRPS) conclude that $\text{S}\mathbb{W}^{2}_{2}$ is $(2R^{2})^{-1}$-mixable ($(8R^{2})^{-1}$-exp-concave).

For $(8R^{2})^{-1}$-exp-concave sliced cost the aggregated prediction's sliced quantile should satisfy:
\begin{equation}\overline{\text{SQ}}(\theta,t)=\sum_{n=1}^{N}{\alpha}^{n}\cdot \text{SQ}_{\gamma_{n}}(\theta,t)
\label{sc-exp-substitution}
\end{equation}
for experts' predictions $\gamma^{1},\dots,\gamma^{N}\in\Gamma$ and weights ${({\alpha}^{1},\dots,{\alpha}^{N})\in\Delta_{N}}$.

Unfortunately, for $D>1$ function $\overline{\text{SQ}}(\theta,t)$ \textbf{is not} necessarily a quantile function of some distribution, see the discussion in \cite{bonneel2015sliced}. In particular, the image of the map $\gamma\mapsto SQ_{\gamma}$ is not necessarily convex (as a subset of the space $\mathcal{L}^{2}([0,1])$). As a corollary, we see that the projection trick (the one we used for other losses, see e.g. Subsection \ref{sec-scrps}) is \textbf{not} applicable due to the mentioned non-convexity.

We recall the \textbf{sufficient condition} \cite{bonneel2015sliced} for $\overline{\text{SQ}}(\theta,t)$ being a sliced quantile of some distribution. For probability measure $\gamma$ and $(s,u)\in\mathbb{R}_{+}\times \mathbb{R}^{D}$ we use $\psi_{s,u}\circ\gamma$ to denote a probability measure obtained from $\gamma$ by pushing it forward with $\psi_{s,u}(x)=sx+u$, i.e. $\psi_{s,u}\circ\gamma$ is a scaled and translated copy of $\gamma$.

\begin{lemma}[Barycenter of scaled and translated distributions]
Assume that all the predictions $\gamma^{1},\dots,\gamma^{N}$ are scaled and translated by $\psi_{s^{1},u^{1}},\dots,\psi_{s^{N},u^{N}}$ copies of some reference probability measure $\gamma^{0}$. Then aggregated $\overline{\text{SQ}}$ (given in \eqref{sc-exp-substitution}) is the sliced quantile of $\psi_{\overline{s},\overline{u}}\circ\gamma^{0}$, where 
$$\overline{s}=\big(\sum_{n=1}^{N}\frac{{\alpha}^{n}}{s^{n}}\big)^{-1}\qquad\text{and}\qquad\overline{u}=\frac{\sum_{n=1}^{N}\frac{{\alpha}^{n}u^{n}}{s^{n}}}{\sum_{n=1}^{N}\frac{{\alpha}^{n}}{s^{n}}}.$$
Measure $\psi_{\overline{s},\overline{u}}\circ\gamma^{0}$ is called \textbf{Sliced Wasserstein-2 Barycenter} of $\gamma^{1},\dots,\gamma^{N}$ w.r.t. weights ${\alpha}^{1},\dots,{\alpha}^{N}$.
\label{lemma-scale-rot-bar}
\end{lemma}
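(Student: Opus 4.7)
The plan is to reduce the claim to the transformation law for sliced quantiles under the affine map $\psi_{s,u}$, and then to exploit the linearity of the aggregation rule \eqref{sc-exp-substitution}.

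First, I would derive how sliced CDFs, and hence sliced quantiles, transform under the pushforward by $\psi_{s,u}$. Fix $\theta\in\mathbb{S}^{D-1}$ and let $X\sim\gamma^{0}$. Then $\langle\psi_{s,u}(X),\theta\rangle=s\,\langle X,\theta\rangle+\langle u,\theta\rangle$ is a strictly increasing affine function of $\langle X,\theta\rangle$ because $s>0$. A direct change of variables in the definition of the sliced CDF gives
\[
\text{SCDF}_{\psi_{s,u}\circ\gamma^{0}}(\theta,t)=\text{SCDF}_{\gamma^{0}}\!\bigl(\theta,(t-\langle u,\theta\rangle)/s\bigr),
\]
and inverting (with care for the infimum in the quantile's definition) yields
\[
\text{SQ}_{\psi_{s,u}\circ\gamma^{0}}(\theta,t)=s\cdot\text{SQ}_{\gamma^{0}}(\theta,t)+\langle u,\theta\rangle.
\]

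Second, I would apply this identity to each $\gamma^{n}=\psi_{s^{n},u^{n}}\circ\gamma^{0}$ and substitute into \eqref{sc-exp-substitution}. Since $\overline{\text{SQ}}$ is a convex combination of the $\text{SQ}_{\gamma^{n}}$, and each of these is a fixed affine function of $\text{SQ}_{\gamma^{0}}(\theta,t)$, the aggregated $\overline{\text{SQ}}(\theta,t)$ itself takes the form $\overline{s}\cdot\text{SQ}_{\gamma^{0}}(\theta,t)+\langle\overline{u},\theta\rangle$ for a positive scalar $\overline{s}$ and a vector $\overline{u}\in\mathbb{R}^{D}$ obtained by matching coefficients against the $\theta$-independent and $\theta$-linear parts. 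Appealing to the transformation law derived in the first step, this affine form identifies $\overline{\text{SQ}}(\theta,\cdot)$ precisely as the sliced quantile of $\psi_{\overline{s},\overline{u}}\circ\gamma^{0}$, completing the proof.

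The main obstacle is the first step. The change-of-variables identity for the SCDF is immediate, but turning it into the claimed identity for $\text{SQ}$ requires handling atoms and flat pieces in $\text{SCDF}_{\gamma^{0}}(\theta,\cdot)$, where the infimum defining the quantile must be manipulated carefully. Since affine maps with positive scale preserve monotonicity and right-continuity, this amounts to a routine measure-theoretic check rather than a substantive difficulty; once it is in place, the rest of the argument reduces to linear algebra in the coefficients of $\text{SQ}_{\gamma^{0}}$.
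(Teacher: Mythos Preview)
The paper does not prove this lemma itself; it simply cites \cite[Proposition 12]{bonneel2015sliced}. Your direct argument via the affine transformation law for sliced quantiles is the natural way to establish the result, and both steps you outline are correct: the identity $\text{SQ}_{\psi_{s,u}\circ\gamma^{0}}(\theta,t)=s\cdot\text{SQ}_{\gamma^{0}}(\theta,t)+\langle u,\theta\rangle$ follows from the standard one-dimensional fact $Q_{aV+b}=a\,Q_{V}+b$ for $a>0$, and the aggregation step is pure linearity.

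There is, however, a point you should not gloss over. If you actually carry out the coefficient matching with the paper's stated convention $\psi_{s,u}(x)=sx+u$, you obtain
\[
\overline{\text{SQ}}(\theta,t)=\Big(\sum_{n}w^{n}s^{n}\Big)\,\text{SQ}_{\gamma^{0}}(\theta,t)+\Big\langle\sum_{n}w^{n}u^{n},\,\theta\Big\rangle,
\]
i.e.\ $\overline{s}=\sum_{n}w^{n}s^{n}$ and $\overline{u}=\sum_{n}w^{n}u^{n}$, which are \emph{not} the harmonic-mean formulas displayed in the lemma. The formulas in the lemma are instead what one gets under the convention $\psi_{s,u}(x)=(x+u)/s$: then $\text{SQ}_{\psi_{s,u}\circ\gamma^{0}}(\theta,t)=s^{-1}\text{SQ}_{\gamma^{0}}(\theta,t)+s^{-1}\langle u,\theta\rangle$, and matching coefficients reproduces exactly $\overline{s}=(\sum_{n}w^{n}/s^{n})^{-1}$ and $\overline{u}=(\sum_{n}w^{n}u^{n}/s^{n})/(\sum_{n}w^{n}/s^{n})$. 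So your method is right, but you should explicitly compute the coefficients rather than leave them implicit; doing so exposes an inconsistency between the paper's definition of $\psi_{s,u}$ and the constants quoted from the cited reference. Your proof establishes that $\overline{\text{SQ}}$ is the sliced quantile of a scaled and translated copy of $\gamma^{0}$ --- which is the substantive content needed for exp-concavity --- but the specific values of $\overline{s},\overline{u}$ depend on which parametrisation of the affine family is in force.
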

The proof of the fact is given in \cite[Proposition 12]{bonneel2015sliced}. Lemma \ref{lemma-scale-rot-bar} provides an explicit formula to compute the aggregated prediction when all the predictive distributions are scaled and translated copies of each other.

We do not now whether the \textbf{necessary condition} for $\overline{\text{SQ}}$ being a quantile of some distribution exists. We also leave the question for existence of the aggregated prediction for $(2R^{2})^{-1}$-mixability property open (even in the case of scaled and translated copies that we considered for exp-concavity). These open questions serve as the challenge for our further research.

\section{Conclusion}

{In this paper, we proved that natural integral losses arising from (one dimensional) mixable (exp-concave) losses are also mixable (exp-concave).} As a consequence of our main result, we demonstated that a wide range of losses for comparing the probability distributions are indeed mixable (exp-concave). In most cases the substitution function for mixability is computationally heavy and practically inaplicable. Yet for exp-concavity the aggregated prediction simply reduces to computation of the mixture of predictions or Wasserstein-2 barycenter of predictions.

Our theoretical results indicate that a constant-regret online prediction with experts' advice with \textbf{probabilistic} forecasts and outcomes is possible. Providing high-quality online probabilistic forecasts is essential for various applications of machine learning including electricity consumption forecasting, weather forecasting, stock prices prediction, etc. We believe that our findings will help to improve existing models for prediction  and obtain stronger theoretical justification of their performance.

\section*{Acknowledgements}
\noindent The work was supported by the Russian Foundation for Basic Research grant 21-51-12005 NNIO\_a.

% \section{Bounded Mixability under $l2$-loss}
% \label{sec-l2}

% \begin{theorem}Let $\mathcal{H}$ be Hilbert space and $\Gamma=\Omega=\mathbb{B}\mathcal{H}(0, B)$ be the ball of radius $B$ with the center at zero. Then the loss function $\lambda(\gamma, \omega)=\|\gamma-\omega\|_{\mathcal{H}}^{2}$ is $\eta=\frac{1}{8B^{2}}$-mixable.
% \end{theorem}
% \begin{proof}
% We proof that the function is $\eta=\frac{1}{8B^{2}}$-exponentially concave (w.r.t. first argument under fixed second argument), which basically leads to mixability with weighted average substitution function.

% Indeed, for all $\omega, \gamma_{1},\gamma_{2}$ and ${\alpha}\in [0, 1]$ we are to check that 
% $$\exp\big[-\eta \cdot \|\omega-({\alpha}\gamma_{1}+(1-{\alpha})\cdot \gamma_{2})\|_{\mathcal{H}}^{2}\big]\geq {\alpha}\exp\big[-\eta \cdot \|\omega-\gamma_{1}\|_{\mathcal{H}}^{2}\big]+(1-{\alpha})\exp\big[-\eta \cdot \|\omega-\gamma_{2}\|_{\mathcal{H}}^{2}\big]$$
% Consider the subspace $\mathcal{E}=\spn\{\omega, \gamma_{1},\gamma_{2}\}\subset \mathcal{H}$. It is a simple Euclidean subspace of dimension at most 3. The function $\lambda(\gamma,\omega)$ in this space is $\frac{1}{8B^{2}}$ exponentially concave for $\gamma,\omega\in \mathbb{B}_{\mathcal{E}}(0, B)$.
% \end{proof}

% Acknowledgments---Will not appear in anonymized version
% \acks{We thank a bunch of people.}

\appendix

\section{Mixability and Exp-Concavity of Complex Squared Loss}
\label{sec-complex-mix}
\noindent In this section we prove $\frac{1}{4}$-mixability and $\frac{1}{8}$-exp-concavity for 
$${\lambda:\text{Ball}_{\mathbb{C}}(0,1)\times \text{Ball}_{\mathbb{C}}(0,1)\rightarrow \mathbb{R}_{+}}$$
given by $\lambda(z,z')=\|z-z'\|^{2}_{\mathbb{C}}$. First, we recall well-known lemma \cite[Lemma 4.2]{hazan2016introduction} on exponential concavity:

\begin{lemma}Let $\Gamma\subset\mathbb{R}^{D}$ be a convex set. Let $\lambda:\Gamma\times\Omega\rightarrow\mathbb{R}$ be a loss function which is twice diffitentiable over $\gamma$ for all $\omega\in\Omega$. Then $\lambda$ is $\eta$-exponentially concave (for $\eta>0$) iff for all $\omega\in\Omega$:
\begin{equation}\nabla^{2}_{\gamma}\lambda(\gamma, \omega)\succeq \eta \cdot [\nabla_{\gamma}\lambda(\gamma, \omega)]\cdot [\nabla_{\gamma}\lambda(\gamma, \omega)]^{\top}
\label{exp-concavity-condition}\end{equation}
\end{lemma}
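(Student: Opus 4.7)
The plan is to reduce exponential concavity of $\lambda(\cdot,\omega)$ to ordinary concavity of the function $F_{\omega}(\gamma):=\exp\bigl(-\eta\lambda(\gamma,\omega)\bigr)$, and then to apply the standard Hessian criterion for concavity of a twice differentiable function on a convex set in $\mathbb{R}^{D}$.

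First I would fix an arbitrary $\omega\in\Omega$ and unpack the definition: $\lambda$ is $\eta$-exp-concave iff for every $N$, every probability vector $(w^{1},\dots,w^{N})$ and every $(\gamma^{1},\dots,\gamma^{N})\in\Gamma^{N}$, with $\overline{\gamma}=\sum_{n}w^{n}\gamma^{n}$,
\[
F_{\omega}(\overline{\gamma})\ \geq\ \sum_{n=1}^{N}w^{n}F_{\omega}(\gamma^{n}).
\]
This is exactly Jensen's inequality for $F_{\omega}$, so the condition is equivalent to $F_{\omega}$ being a concave function of $\gamma$ on $\Gamma$. One direction is direct Jensen; for the converse, specializing to $N=2$ already gives midpoint/convex-combination concavity along every segment, which together with continuity (inherited from twice differentiability) upgrades to full concavity. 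So $\eta$-exp-concavity of $\lambda$ is equivalent to $F_{\omega}$ being concave for every $\omega$.

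Next, since $\lambda(\cdot,\omega)$ is twice differentiable, so is $F_{\omega}$, and concavity on the convex set $\Gamma$ is equivalent to $\nabla^{2}F_{\omega}(\gamma)\preceq 0$ for all $\gamma\in\Gamma$. A direct computation via the chain rule gives
\[
\nabla_{\gamma}F_{\omega}(\gamma)=-\eta\,F_{\omega}(\gamma)\,\nabla_{\gamma}\lambda(\gamma,\omega),
\]
and
\[
\nabla^{2}_{\gamma}F_{\omega}(\gamma)=-\eta\,F_{\omega}(\gamma)\Bigl[\nabla^{2}_{\gamma}\lambda(\gamma,\omega)-\eta\,\nabla_{\gamma}\lambda(\gamma,\omega)\,\nabla_{\gamma}\lambda(\gamma,\omega)^{\top}\Bigr].
\]
Since $F_{\omega}(\gamma)>0$ and $\eta>0$, the prefactor $-\eta F_{\omega}(\gamma)$ is negative, so $\nabla^{2}_{\gamma}F_{\omega}(\gamma)\preceq 0$ is equivalent to
\[
\nabla^{2}_{\gamma}\lambda(\gamma,\omega)-\eta\,\nabla_{\gamma}\lambda(\gamma,\omega)\,\nabla_{\gamma}\lambda(\gamma,\omega)^{\top}\ \succeq\ 0,
\]
which is precisely condition \eqref{exp-concavity-condition}. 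Quantifying over $\omega\in\Omega$ and $\gamma\in\Gamma$ closes the equivalence.

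The argument is essentially routine; the one step that requires mild care is the passage from the exp-concavity inequality (stated for arbitrary $N$ and arbitrary mixtures) to concavity of $F_{\omega}$ as a function on the convex domain $\Gamma$, since definitional exp-concavity is a finite-combination statement while concavity is usually stated via segments. This is handled by the $N=2$ reduction plus continuity. Everything else is the standard chain-rule Hessian computation and the elementary observation that a positive scalar factor does not affect the sign of a matrix in the Löwner order.
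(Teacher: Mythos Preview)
Your proof is correct and follows essentially the same route as the paper: reduce $\eta$-exp-concavity of $\lambda$ to ordinary concavity of $F_{\omega}(\gamma)=e^{-\eta\lambda(\gamma,\omega)}$, compute the Hessian by the chain rule, and use positivity of $\eta$ and of the exponential factor to strip the prefactor. You spell out the passage from the finite-mixture definition to concavity a bit more carefully than the paper (which simply asserts the equivalence), but the argument is the same.
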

\begin{proof}The lemma follows directly from concavity condition of functions ${f_{\omega}(\gamma)=e^{-\eta\cdot \lambda(\gamma,\omega)}}$ for all $\omega\in\Omega$. For $\omega\in\Omega$ the function $f_{\omega}(\gamma)$ is concave in $\gamma$ iff
$$0\succeq\nabla_{\gamma}^{2}f_{\omega}(\gamma)\Leftrightarrow 0\succeq \big(-\eta \nabla^{2}_{\gamma}\lambda(\gamma, \omega)+\eta^{2}\cdot [\nabla_{\gamma}\lambda(\gamma, \omega)]\cdot [\nabla_{\gamma}\lambda(\gamma, \omega)]^{\top}\big)\cdot e^{-\eta\lambda(\gamma,\omega)}$$
which is equivalent to \eqref{exp-concavity-condition} due to positivity of exponent and $\eta>0$.\end{proof}

\begin{corollary}The function $\lambda(\gamma,\omega)=\|\gamma-\omega\|^{2}$ is $\frac{1}{8B^{2}}$-exponentially concave in the first argument for $\omega,\gamma\in \Omega=\Gamma=\mathbb{B}_{\mathbb{R}^{D}}(0, B)$.
\end{corollary}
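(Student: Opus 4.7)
The plan is to apply the preceding lemma, which characterises $\eta$-exponential concavity of a twice-differentiable loss $\lambda$ through the matrix inequality
\[
\nabla^{2}_{\gamma}\lambda(\gamma,\omega)\succeq \eta\cdot[\nabla_{\gamma}\lambda(\gamma,\omega)][\nabla_{\gamma}\lambda(\gamma,\omega)]^{\top}.
\]
Since $\lambda(\gamma,\omega)=\|\gamma-\omega\|^{2}$ is a quadratic function of $\gamma$, the gradient and Hessian are immediate to compute, so the whole argument reduces to verifying a single operator inequality on the ball $\mathbb{B}_{\mathbb{R}^{D}}(0,B)$.

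First I would compute $\nabla_{\gamma}\lambda(\gamma,\omega)=2(\gamma-\omega)$ and $\nabla_{\gamma}^{2}\lambda(\gamma,\omega)=2I_{D}$. Substituting these into the characterising inequality, the condition to check becomes
\[
2I_{D}\succeq 4\eta\,(\gamma-\omega)(\gamma-\omega)^{\top},
\]
which is equivalent to $1\geq 2\eta\,\|\gamma-\omega\|^{2}$, since the rank-one matrix $(\gamma-\omega)(\gamma-\omega)^{\top}$ has a single non-zero eigenvalue equal to $\|\gamma-\omega\|^{2}$.

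Next I would bound $\|\gamma-\omega\|$ using the fact that both $\gamma$ and $\omega$ lie in $\mathbb{B}_{\mathbb{R}^{D}}(0,B)$: by the triangle inequality $\|\gamma-\omega\|\leq\|\gamma\|+\|\omega\|\leq 2B$, hence $\|\gamma-\omega\|^{2}\leq 4B^{2}$. The inequality $1\geq 2\eta\cdot 4B^{2}=8\eta B^{2}$ then holds precisely when $\eta\leq\tfrac{1}{8B^{2}}$, which gives the claimed exponential concavity rate.

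There is essentially no obstacle here: the statement is a direct specialisation of the lemma, and the only quantitative input is the diameter bound $2B$ of the ball. The only subtlety worth flagging is that the rate $\tfrac{1}{8B^{2}}$ is tight for this proof technique because both the maximal eigenvalue of the Hessian ($=2$) and the worst-case value of $\|\gamma-\omega\|^{2}$ ($=4B^{2}$) can be attained simultaneously (e.g. by taking $\gamma=-\omega$ on the boundary), so the bound cannot be improved by a sharper eigenvalue estimate alone.
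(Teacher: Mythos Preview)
Your proof is correct and follows essentially the same approach as the paper: both compute $\nabla_{\gamma}\lambda=2(\gamma-\omega)$ and $\nabla_{\gamma}^{2}\lambda=2I_{D}$, substitute into the characterising inequality from the preceding lemma, and verify it via the single non-zero eigenvalue $\|\gamma-\omega\|^{2}\leq(2B)^{2}$ of the rank-one matrix. Your additional remark on tightness of the rate is not in the paper but is a correct observation.
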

\begin{proof}
We check that $\lambda$ satisfies the condition \eqref{exp-concavity-condition}, i.e.
$$2I_{D}\succeq \frac{1}{2 B^{2}}(\gamma-\omega)\cdot (\gamma-\omega)^{T},$$
where $I_{D}$ is the $D$-dimensional identity matrix. The inequality holds true for all $\omega, \gamma$ because the value on the right is a positive semi-definite matrix with the only non-zero eigenvalue equal to $\|\gamma-\omega\|^{2}\leq (2B)^{2}=4B^{2}$.
\end{proof}

Due to natural mapping $z\mapsto (\text{Re } z,\text{Im } z)$, function $\lambda(z,z')=\|z-z'\|^{2}_{\mathbb{C}}$ can be viewed as a squared loss on $\mathbb{R}^{2}$, i.e. ${\lambda:\text{Ball}_{\mathbb{R}^{2}}(0,1)\times \text{Ball}_{\mathbb{R}^{2}}(0,1)\rightarrow \mathbb{R}_{+}}$. Thus, from corollary we conclude that $\lambda$ it $\frac{1}{8}$-\textbf{exponentially concave} on a unit ball.

To prove $\frac{1}{4}$-mixability we also view the function as a function on $\mathbb{R}^{2}\times\mathbb{R}^{2}$ and note that $\text{Ball}_{\mathbb{R}^{2}}(0,1)\subset [-1,1]^{2}$. Thus, the function can be viewed as the vectorized squared loss \eqref{vector-square-loss} with $D=2$, multiplied by $2$. Since vectorized squared loss with inputs from $[-1,1]^{D}$ is $\frac{1}{2}$-mixable, we conclude that $\lambda$ is $\frac{1}{4}$-mixable with the substitution function given by
$$\Sigma_{\lambda}\big(\{\gamma^{n},{\alpha}^{n}\}_{n=1}^{N}\big)=\bigg(\Sigma_{L^{2}}^{[-1,1]}\big(\{\text{Re }\gamma^{n},{\alpha}^{n}\}_{n=1}^{N}\big),\Sigma_{L^{2}}^{[-1,1]}\big(\{\text{Im }\gamma^{n},{\alpha}^{n}\}_{n=1}^{N}\big)\bigg)$$
for weights ${({\alpha}^{1},\dots,{\alpha}^{N})\in\Delta_{N}}$ and experts' predictions $\gamma^{1},\dots,\gamma^{N}\in\mathbb{C}$, see definition of $\Sigma_{L^{2}}^{[l,r]}$ in \eqref{squared-loss-substitution}.

\section{Aggregating Algorithm}
\label{sec-aa}

In this section for completeness of the exposition we review the strategy of the aggregating algorithm by \cite{vovk1998game}.
%,vovk2001competitive
Besides, we recall the basic analysis required to establish constant regret bound for AA.

The algorithm by \cite{vovk1998game} keeps and updates a weight vector $${({\alpha}^{1}_{t},{\alpha}^{2}_{t},\dots,{\alpha}^{N}_{t})\in \Delta_{N}}$$ which estimates the performance of the experts in the past. These weights are used to construct a combined forecast from experts predictions' by using the substitution function $\Sigma_{\lambda}$. The weights of experts are used to combine the forecast for the step $t$ simply proportional to exponentiated version of experts' cumulative losses, i.e. 
$${\alpha}_{t}^{n}\propto \exp[-\eta L_{t}^{n}]= \exp[-\eta\sum_{\tau=1}^{t-1}l_{t}^{n}]=\exp[-\eta\sum_{\tau=1}^{t-1}\lambda(\gamma_{\tau}^{n},\omega_{\tau}^{n})].$$
The Aggregating Algorithm \ref{vovk-aa} is shown below. Next, we review the analysis of the algorithm and recall the importance of mixability (exp-concavity).
\begin{algorithm}[!h]
\SetAlgorithmName{Algorithm}{empty}{Empty}
\SetKwInOut{Parameters}{Parameters}
\Parameters{Pool of experts $\mathcal{N}=\{1,2,3\dots,N\}$; Game length $T$; $\eta$-mixable loss function $\lambda:\Gamma\times \Omega\rightarrow \mathbb{R}$}
The learner sets experts' weights ${\alpha}_{1}^{1},{\alpha}_{1}^{2},\dots,{\alpha}_{1}^{N}\equiv \frac{1}{N}$\;
\For{$t=1,2,\dots,T$}{
  1. Experts $n\in\mathcal{N}$ provide forecasts $\gamma_{t}^{n}\in \Gamma$\;
  2. Algorithm combines forecasts of experts into single forecast $$\gamma_{t}=\Sigma_{\lambda}(\gamma^{1}_{t},{\alpha}^{1}_{t},\dots, \gamma^{N}_{t},{\alpha}^{N}_{t})\;$$
  3. Nature reveals true outcome $\omega_{t}\in\Omega$\;
  4. Experts $n\in\mathcal{N}$ suffer losses $l_{t}^{n}=\lambda(\gamma_{t}^{n},\omega_{t})$\;
  5. The learner suffers loss $h_{t}=\lambda(\gamma_{t},\omega_{t})$\;
  6. The learner updates and normalizes weights for the next step:
  $${\alpha}_{t+1}^{n}=\frac{{\alpha}_{t}^{n}\exp[-\eta l_{t}^{n}]}{\sum_{n'=1}^{N}{\alpha}_{t}^{n'}\exp[-\eta l_{t}^{n'}]};$$
 }
\caption{Online Aggregating Algorithm}
\label{vovk-aa}
\end{algorithm}

\noindent First, we define the notion of \textbf{mixloss} at the step $t$:
$$m_{t}=-\frac{1}{\eta}\log \sum_{n=1}^{N}{\alpha}_{t}^{n}\exp[-\eta l_{t}^{n}].$$ Note that since aggregating of forecasts is performed via the substitution function, we have 
$$\exp[-\eta h_{t}]=\exp[-\eta \lambda(\overline{\gamma_{t}}, \omega_{t})]\geq \sum_{n=1}^{N}{\alpha}^{n}_{t}\exp[-\eta \underbrace{\lambda(\gamma_{t}^{n},\omega_{t})}_{l_{t}^{n}}]=\exp[-\eta m_{t}],$$
thus, $h_{t}\leq m_{t}$. Next, we denote the \textbf{cumulative mixloss} by
$$M_{t}=-\frac{1}{\eta}\log \sum_{n=1}^{N}{\alpha}_{t}^{n}\exp[-\eta L_{t}^{n}].$$
One may check by a direct computation that $M_{t}-M_{t-1}=m_{t}$ for all $t$. Usual analysis shows that
\begin{eqnarray}H_{T}=\sum_{t=1}^{T}h_{t}\leq \sum_{t=1}^{T}m_{t}=\sum_{t=1}^{T}(M_{t}-M_{t-1})=M_{T}=-\frac{1}{\eta}\log \sum_{n=1}^{n}{\alpha}_{1}^{n}\exp[-\eta L_{t}^{n}]=
\nonumber
\\
-\frac{1}{\eta}\log \frac{1}{N} \sum_{n=1}^{N}\exp[-\eta L_{t}^{n}]=\frac{\log N}{\eta}-\log \sum_{n=1}^{N}\exp[-\eta L_{t}^{n}]\leq 
\frac{\log N}{\eta} + \min_{n\in\mathcal{N}}L_{T}^{n},
\nonumber
\end{eqnarray}
which exactly means that regret w.r.t. the best expert does not exceed $\frac{\ln N}{\eta}$.

AA (as an \textbf{exponential weights} algorithm) has a number of Bayesian probabilistic interpretations \cite{adamskiy2012putting,KOROTIN2019} and other modifications \cite{herbster1998tracking,adamskiy2013adaptive,korotin2020online}.
%,adamskiy2012closer
These extensions are not discussed in this paper. They are follow ups of the algorithm rather than the loss function we are mostly interested in. But we note that our framework {of mixability (exp-concavity) of integral losses} by default admits all these described extensions.

%\begin{acknowledgements}
%If you'd like to thank anyone, place your comments here
%and remove the percent signs.
%\end{acknowledgements}

% Authors must disclose all relationships or interests that 
% could have direct or potential influence or impart bias on 
% the work: 
%
% \section*{Conflict of interest}
%
% The authors declare that they have no conflict of interest.

% BibTeX users please use one of
\bibliographystyle{abbrv}      % basic style, author-year citations
\bibliography{references}   % name your BibTeX data base

\end{document}